\newtheorem{lemma}{Lemma}
\newtheorem{theorem}{Theorem}
\newtheorem{proposition}{Proposition}
\newcommand{\norm}[1]{\lVert#1\rVert}
\DeclarePairedDelimiter{\ceil}{\lceil}{\rceil}
\begin{document}
\title{Graph Convolutional Networks for Hyperspectral Image Classification}

\author{Danfeng Hong,~\IEEEmembership{Member,~IEEE,}
        Lianru Gao,~\IEEEmembership{Senior Member,~IEEE,}
        Jing Yao,
        Bing Zhang,~\IEEEmembership{Fellow,~IEEE,}\\
        Antonio Plaza,~\IEEEmembership{Fellow,~IEEE,}
        and Jocelyn Chanussot,~\IEEEmembership{Fellow,~IEEE}

\thanks{This work was supported by the National Natural Science Foundation of China under Grant 41722108 and Grant 91638201 as well as with the support of the AXA Research Fund. (\emph{Corresponding author: Lianru Gao})}
\thanks{D. Hong is with the Univ. Grenoble Alpes, CNRS, Grenoble INP, GIPSA-lab, 38000 Grenoble, France. (e-mail: hongdanfeng1989@gmail.com)}
\thanks{L. Gao is with the Key Laboratory of Digital Earth Science, Aerospace Information Research Institute, Chinese Academy of Sciences, Beijing 100094, China. (e-mail: gaolr@aircas.ac.cn)}
\thanks{J. Yao is with the School of Mathematics and Statistics, Xi’an Jiaotong University, 710049 Xi’an, China. (e-mail: jasonyao@stu.xjtu.edu.cn)}
\thanks{B. Zhang is with the Key Laboratory of Digital Earth Science, Aerospace Information Research Institute, Chinese Academy of Sciences, Beijing 100094, China, and also with the College of Resources and Environment, University of Chinese Academy of Sciences, Beijing 100049, China. (e-mail: zb@radi.ac.cn)}
\thanks{A. Plaza is with the Hyperspectral Computing Laboratory, Department of Technology of Computers and Communications, Escuela Polit\'ecnica, University of Extremadura, 10003 C\'aceres, Spain. (e-mail: \mbox{aplaza@unex.es}).}
\thanks{J. Chanussot is with the Univ. Grenoble Alpes, INRIA, CNRS, Grenoble INP, LJK, F-38000 Grenoble, France, and also with the Aerospace Information Research Institute, Chinese Academy of Sciences, Beijing 100094, China. (e-mail: jocelyn@hi.is)}
}

\markboth{Submission to IEEE Transactions on Geoscience and Remote Sensing,~Vol.~XX, No.~XX, ~XXXX,~2020}
{Shell \MakeLowercase{\textit{et al.}}: Graph Convolutional Networks for Hyperspectral Image Classification}

\maketitle
\begin{abstract}
\textcolor{blue}{This is the pre-acceptance version, to read the final version please go to IEEE Transactions on Geoscience and Remote Sensing on IEEE Xplore.} Convolutional neural networks (CNNs) have been attracting increasing attention in hyperspectral (HS) image classification, owing to their ability to capture spatial-spectral feature representations. Nevertheless, their ability in modeling relations between samples remains limited. Beyond the limitations of grid sampling, graph convolutional networks (GCNs) have been recently proposed and successfully applied in irregular (or non-grid) data representation and analysis. In this paper, we thoroughly investigate CNNs and GCNs (qualitatively and quantitatively) in terms of HS image classification. Due to the construction of the adjacency matrix on all the data, traditional GCNs usually suffer from a huge computational cost, particularly in large-scale remote sensing (RS) problems. To this end, we develop a new mini-batch GCN (called miniGCN hereinafter) which allows to train large-scale GCNs in a mini-batch fashion. More significantly, our miniGCN is capable of inferring out-of-sample data without re-training networks and improving classification performance. Furthermore, as CNNs and GCNs can extract different types of HS features, an intuitive solution to break the performance bottleneck of a single model is to fuse them. Since miniGCNs can perform batch-wise network training (enabling the combination of CNNs and GCNs) we explore three fusion strategies: additive fusion, element-wise multiplicative fusion, and concatenation fusion to measure the obtained performance gain. Extensive experiments, conducted on three HS datasets, demonstrate the advantages of miniGCNs over GCNs and the superiority of the tested fusion strategies with regards to the single CNN or GCN models. The codes of this work will be available at \url{https://github.com/danfenghong/IEEE_TGRS_GCN} for the sake of reproducibility.
\end{abstract}
\graphicspath{{figures/}}

\begin{IEEEkeywords}
Hyperspectral (HS) classification, convolutional neural networks (CNNs), graph convolutional networks (GCNs), deep learning, fusion.
\end{IEEEkeywords}

\section{Introduction}
\IEEEPARstart{L}{and} use and land cover (LULC) classification \cite{anderson1976land} using Earth observation (EO) data, e.g., hyperspectral (HS) \cite{rasti2020feature}, synthetic aperture radar (SAR) \cite{kang2020learning}, light detection and ranging (LiDAR) \cite{huang2019multi}, etc. is a challenging topic in geoscience and remote sensing (RS). Characterized by their rich and detailed spectral information, HS images allow discriminating the objects of interest more effectively (particularly those in spectrally similar classes) by capturing more subtle discrepancies from the contiguous shape of the spectral signatures associated to their pixels. HS imagery enables the detection and recognition of the materials on the Earth's surface at a more fine and accurate level compared to RGB and multispectral (MS) data. However, the high spectral mixing between materials \cite{bioucas2012hyperspectral} and spectral variability and complex noise effects \cite{hong2019augmented} bring difficulties in extracting discriminative information from such data.

Over the past decades, a variety of hand-crafted and learning-based feature extraction (FE) algorithms \cite{ghamisi2017advances,peng2017robust,peng2018self,liu2019unsupervised,hong2019learning,liu2019review,wang2019spatial,samat2020meta,hong2020x} (either unsupervised or supervised) have been successfully designed for HS image classification. Among them, morphological profiles (MPs) \cite{benediktsson2005classification} are an effective tool that allows us to manually extract spatial-spectral features from HS images. In \cite{fauvel2008spectral}, Fauvel \textit{et al.} used MPs as input vectors for a support vector machine (SVM) classifier, achieving satisfactory classification results. Samat \textit{et al.} \cite{samat2018classification} designed new maximally stable extremal region-guided MPs, yielding a high classification performance on multispectral images. Other works based on morphological operations have been developed to further enhance feature representations, including attribute profiles (APs) \cite{dalla2010morphological}, invariant APs \cite{hong2020invariant,wu2020fourier}. Another typical FE strategy is subspace-based learning, e.g., sparse representation \cite{chen2011hyperspectral,gao2020spectral} and manifold learning \cite{hong2017learning,hong2019learning}. These methods learn transformations or projections by managing the high-dimensional original space using a new, latent, low-dimensional subspace representation. Although the aforementioned approaches have been proven to be effective in HS classification tasks, feature discrimination still remains limited due to the lack of powerful data fitting and representation ability.

Inspired by the success of deep learning (DL) techniques, significant progress has been made in the area of HS image classification by using various advanced deep networks \cite{li2019deep}. Chen \textit{et al.} \cite{chen2014deep} applied stacked auto-encoder networks to dimensionally-reduced HS images --obtained by principal component analysis (PCA) -- for HS image classification. Further, Chen \textit{et al.} \cite{chen2016deep} adopted convolutional neural networks (CNNs) to extract spatial-spectral features more effectively from HS images, thereby yielding higher classification performance. Recurrent neural networks (RNNs) \cite{liu2017bidirectional,wu2017convolutional} can process the spectral signatures as sequential data. In \cite{hang2019cascaded}, a cascaded RNN was proposed to make full use of spectral information for high-accuracy HS image classification. Recently, Hang \textit{et al.} \cite{hang2020classification} developed a multitask generative adversarial networks and provided new insight into HS image classification, yielding state-of-the-art performance.

\begin{figure}[!t]
	  \centering
		\subfigure{
			\includegraphics[width=0.48\textwidth]{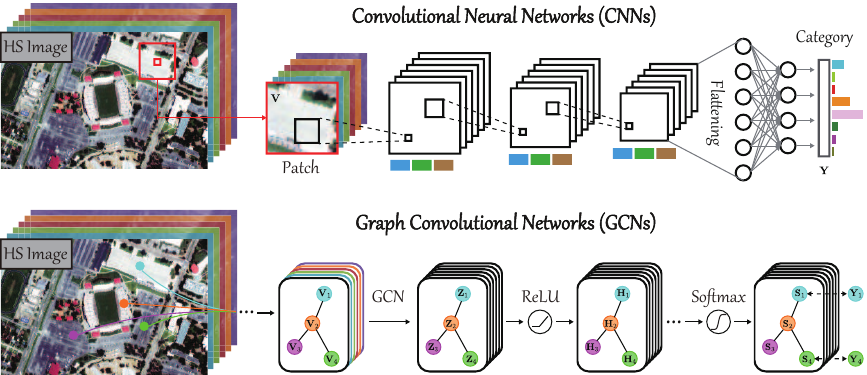}
		}
        \caption{Comparison of CNN and GCN architectures in HS image classification tasks. The variables of $\mathbf{V}$, $\mathbf{Z}$, $\mathbf{H}$, $\mathbf{S}$, and $\mathbf{Y}$ in GCNs denote vertexes, hidden representations via GCN layer, hidden representations via ReLU layer, hidden representations via softmax layer, and labels, respectively.}
\label{fig:CNNs_GCNs}
\end{figure}

Comparatively, graph convolutional networks (GCNs) \cite{kipf2016semi} are a hot topic and emerging network architecture, which is able to effectively handle graph structure data by modeling relations between samples (or vertexes). Therefore, GCNs can be naturally used to model long-range spatial relations in the HS image (see Fig. \ref{fig:CNNs_GCNs}), which fail to be considered in CNNs. Currently, GCNs are less popular than CNNs in HS image classification. There are a few works related to the use of GCNs in HSI classification, though. Shahraki \textit{et al.} \cite{shahraki2018graph} proposed to cascade 1-D CNNs and GCNs for HS image classification. Qin \textit{et al.} \cite{qin2018spectral} extended the original GCNs to a second-order version by simultaneously considering spatial and spectral neighborhoods. Wan \textit{et al.} \cite{wan2019hyperspectral} performed superpixel segmentation on the HS image and fed it into GCN to reduce the computational cost and improve the classification accuracy. However, there are some potential limitations of GCNs regarding the following aspects: 
\begin{itemize}
    \item The high computational cost (resulting from the construction of the adjacency matrix) is a significant bottleneck of GCNs in the HS image classification task, particularly when using large-scale HS image data.
    \item GCNs only allow for full-batch network learning, that is, feeding all samples at once into the network. This might lead to large memory costs and slow gradient descent, as well as the negative effects of variable updating.
    \item Last but not least, a trained GCN-based model fails to predict the new input samples (i.e. out-of-samples) without re-training the network, which has a major influence on the use of GCNs in practice.
\end{itemize}

To overcome these difficulties, in this work we introduce a simple but effective mini-batch GCN (called miniGCN). Similar to CNNs, miniGCNs can effectively train the network for classification on a downsampled graph (or topological structure) in mini-batch fashion, and meanwhile the learned model can be directly used for prediction purposes on new data. In addition, with our newly proposed miniGCNs, we aim to make a side-by-side comparison between CNNs and GCNs (both qualitatively and quantitatively) and raise an interesting question: \textit{which one between CNNs and GCNs can assist more in the HS image classification task?} It is well known that CNNs and GCNs can extract and represent spectral information from HS images using different perspectives, i.e., spatial-spectral features of CNNs, graph (or relation) representations of GCNs, etc. This naturally motivates us to jointly use them by investigating different fusion strategies, making them even more suitable for HS image classification. More specifically, the main contributions of this paper are three-fold:

\begin{itemize}
    \item We systematically analyze CNNs and GCNs with a focus on HS image classification. To the best of our knowledge, this is the first time that the potentials and drawbacks of GCNs (in comparison with CNNs) are investigated the community.
    \item We propose a novel supervised version of GCNs: miniGCNs, for short. As the name suggests, miniGCNs can be trained in mini-batch fashion, trying to find a better and more robust local optimum. Unlike traditional GCNs, our miniGCNs are not only capable of training the networks using training set, but also allow for a straightforward inference of large-scale, out-of-samples using the trained model.
    \item We develop three fusion schemes, including additive fusion, element-wise multiplicative fusion, and concatenation fusion, to achieve better classification results in HS images by integrating features extracted from CNNs and our miniGCNs, in an end-to-end trainable network.
\end{itemize}

The remaining of the paper is organized as follows. Section II deeply reviews GCN-related knowledge. Section III elaborates on the proposed miniGCNs and introduces three different fusion strategies in the context of a general end-to-end fusion network. Extensive experiments and analyses are given in section IV. Section V concludes the paper with some remarks and hints at plausible future research work.

\section{Review of GCNs}
In this section, we provide some preliminaries of GCNs by reviewing the basic definitions and notations, including graph construction and several important theorems and proofs for graph convolution in the spectral domain.

\subsection{Definition of Graph} 
A graph is a complex nonlinear data structure, which is used to describe a one-to-many relationship in a non-Euclidean space. In our case, the relations between spectral signatures can be represented as an undirected graph. Let $\mathcal{G} = (\mathcal{V}, \mathcal{E})$ be an undirected graph, where $\mathcal{V}$ and $\mathcal{E}$ denote the vertex and edge sets, respectively. In our context, the vertex set consists of HS pixels, while the edge set is composed of the similarities between any two vertexes, i.e., $\mathcal{V}_i$ and $\mathcal{V}_j$.

\subsection{Construction of the Adjacency Matrix} 
The adjacency matrix, denoted as $\mathbf{A}$, defines the relationships (or edges) between vertexes. Each element in $\mathbf{A}$ can be generally computed by using the following radial basis function (RBF):
\begin{equation}
\label{eq1}
\begin{aligned}
      \mathbf{A}_{i,j} = \mathrm{exp}(-\frac{\norm{\mathbf{x}_{i}-\mathbf{x}_{j}}^{2}}{\sigma^{2}}),
\end{aligned}
\end{equation}
where $\sigma$ is a parameter to control the width of the RBF. The vectors $\mathbf{x}_{i}$ and $\mathbf{x}_{j}$ denote the spectral signatures associated to the vertexes $v_i$ and $v_j$. Once $\mathbf{A}$ is given, we create the corresponding graph Laplacian matrix $\mathbf{L}$ as follows:
\begin{equation}
\label{eq2}
\begin{aligned}
      \mathbf{L}=\mathbf{D}-\mathbf{A},
\end{aligned}
\end{equation}
where $\mathbf{D}$ is a diagonal matrix representing the degrees of $\mathbf{A}$, i.e., $\mathbf{D}_{i,i}=\sum_{j}\mathbf{A}_{i,j}$ \cite{hong2019cospace,hong2019learnable}. To enhance the generalization ability of the graph \cite{chung1997spectral}, the symmetric normalized Laplacian matrix ($\mathbf{L}_{sym}$) can be used as follows:
\begin{equation}
\label{eq3}
\begin{aligned}
      \mathbf{L}_{sym}&=\mathbf{D}^{-\frac{1}{2}}\mathbf{L}\mathbf{D}^{-\frac{1}{2}}\\
      &=\mathbf{I}-\mathbf{D}^{-\frac{1}{2}}\mathbf{A}\mathbf{D}^{-\frac{1}{2}},
\end{aligned}
\end{equation}
where $\mathbf{I}$ is the identity matrix.

\subsection{Graph Convolutions in the Spectral Domain} 
Given two functions $f$ and $g$, their convolution can be then written as:
\begin{equation}
\label{eq4}
\begin{aligned}
      f(t)\star g(t)\triangleq\int_{-\infty }^{\infty}f(\tau)g(t-\tau)d\tau,
\end{aligned}
\end{equation}
where $\tau$ is the shifting distance and $\star$ denotes the convolution operator. 

\begin{theorem}
The Fourier transform of the convolution of two functions $f$ and $g$ is the product of their corresponding Fourier transforms. This can be formulated as 
\begin{equation}
\label{eq5}
\begin{aligned}
      \mathcal{F}[f(t)\star g(t)]=\mathcal{F}[f(t)]\cdot\mathcal{F}[g(t)],
\end{aligned}
\end{equation}
where $\mathcal{F}$ and $\cdot$ denote the Fourier transform and point-wise multiplication, respectively.
\end{theorem}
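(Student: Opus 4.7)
The plan is to prove the identity by unrolling the definition of the Fourier transform applied to the convolution integral, exchanging the order of integration, and recognizing the resulting product structure. Fix a frequency variable $\omega$ and write $\mathcal{F}[h(t)](\omega) = \int_{-\infty}^{\infty} h(t)\, e^{-i\omega t}\, dt$ as the working convention; the proof is invariant (up to the usual normalization constants) under the other common conventions.

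First, I would substitute the definition of the convolution from \eqref{eq4} into the left-hand side, obtaining a double integral of the form
\begin{equation*}
\mathcal{F}[f \star g](\omega) = \int_{-\infty}^{\infty}\!\!\int_{-\infty}^{\infty} f(\tau)\, g(t-\tau)\, e^{-i\omega t}\, d\tau\, dt.
\end{equation*}
Next, I would apply Fubini's theorem to exchange the order of integration, which is the one substantive analytic step and the main potential obstacle: it requires absolute integrability of the joint integrand, which follows from $f, g \in L^{1}(\mathbb{R})$ since $\int\!\int |f(\tau) g(t-\tau)|\, d\tau\, dt = \|f\|_{1} \|g\|_{1}$ by a translation-invariance argument. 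I will state this integrability hypothesis explicitly and invoke Fubini to reorder the integrals.

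After the swap, the inner integral in $t$ becomes $\int_{-\infty}^{\infty} g(t-\tau)\, e^{-i\omega t}\, dt$. I would then perform the substitution $u = t - \tau$, whose Jacobian is trivial, and factor the exponential as $e^{-i\omega t} = e^{-i\omega u}\, e^{-i\omega \tau}$. This isolates the $\tau$-dependence outside the inner integral, and the inner integral becomes exactly $\mathcal{F}[g](\omega)$.

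Finally, pulling $\mathcal{F}[g](\omega)$ out of the outer integral (it no longer depends on $\tau$) leaves $\int_{-\infty}^{\infty} f(\tau)\, e^{-i\omega \tau}\, d\tau$, which is $\mathcal{F}[f](\omega)$. Multiplying gives $\mathcal{F}[f](\omega)\cdot\mathcal{F}[g](\omega)$, as claimed. The whole argument is essentially a change of variable plus Fubini; the only nontrivial ingredient is verifying the integrability assumption that licenses the exchange of integrals, and I would make that the one point of care in the write-up.
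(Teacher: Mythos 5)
Your proof is correct: it is the standard textbook argument (substitute the convolution definition, justify the interchange of integrals via Fubini using $f,g\in L^{1}(\mathbb{R})$, change variables $u=t-\tau$, and factor the exponential). The paper itself offers no proof of this theorem --- it simply states it as a well-known result and cites a reference --- so your write-up supplies exactly the omitted argument, and your care in flagging the integrability hypothesis needed for Fubini is the right point to emphasize.
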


\begin{theorem}
The inverse Fourier transform ($\mathcal{F}^{-1}$) of the convolution of two functions $f$ and $g$ is equal to $2\pi$ the product of their corresponding inverse Fourier transforms:
\begin{equation}
\label{eq6}
\begin{aligned}
      \mathcal{F}^{-1}[f(t)\star g(t)]=2\pi\mathcal{F}^{-1}[f(t)]\cdot\mathcal{F}^{-1}[g(t)].
\end{aligned}
\end{equation}
\end{theorem}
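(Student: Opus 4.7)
The plan is to mirror the derivation of Theorem 1 but with the inverse Fourier transform in place of the forward one. First I would expand the left-hand side using the definition $\mathcal{F}^{-1}[h](t) = \frac{1}{2\pi}\int_{-\infty}^{\infty} h(\omega)\, e^{i\omega t}\, d\omega$ applied to $h = f \star g$, and then substitute the convolution formula from equation (4). Assuming $f, g \in L^{1}(\mathbb{R})$ so that Fubini's theorem applies, I would interchange the order of integration to obtain a double integral in $\tau$ and $\omega$.

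Next, in the inner integral I would make the substitution $u = \omega - \tau$ (with $d\omega = du$), which splits the exponential as $e^{i\omega t} = e^{i\tau t}\, e^{iu t}$. The double integral then factors into the product of two independent one-dimensional integrals, one in $\tau$ involving $f(\tau)e^{i\tau t}$ and one in $u$ involving $g(u)e^{iut}$. Each of these integrals is, by definition, $2\pi$ times the corresponding inverse Fourier transform, i.e.\ $\int f(\tau)e^{i\tau t}\,d\tau = 2\pi\,\mathcal{F}^{-1}[f](t)$ and likewise for $g$. Combining the leading factor $\frac{1}{2\pi}$ with the two factors of $2\pi$ produced by reconstituting the inverse transforms leaves exactly one $2\pi$ in front, giving the claimed identity.

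A shorter alternative is to invoke Theorem 1 directly together with the duality identity $\mathcal{F}^{-1}[h](t) = \frac{1}{2\pi}\mathcal{F}[h](-t)$. Writing $\mathcal{F}^{-1}[f \star g](t) = \frac{1}{2\pi}\mathcal{F}[f \star g](-t)$, applying $\mathcal{F}[f \star g] = \mathcal{F}[f]\cdot\mathcal{F}[g]$, and then converting each $\mathcal{F}[\cdot](-t)$ back into $2\pi\,\mathcal{F}^{-1}[\cdot](t)$ yields the same result after bookkeeping of the constants.

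I do not expect any genuinely hard step: the only real obstacle is keeping track of the $2\pi$ prefactors, which depend on the chosen Fourier convention (here with the $\frac{1}{2\pi}$ placed on the inverse transform). Justifying the interchange of integrals requires only mild integrability assumptions on $f$ and $g$, which can be stated explicitly or, as is standard in the GCN literature, taken for granted since the theorem is used only formally to motivate the spectral graph convolution.
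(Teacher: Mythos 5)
The paper does not actually prove this statement: Theorems~1 and~2 are both quoted as ``well-known'' results and attributed to the textbook \cite{mcgillem1991continuous}, so there is no in-paper argument to compare against. Your direct computation is nevertheless correct and is the standard one: with the convention $\mathcal{F}^{-1}[h](t)=\frac{1}{2\pi}\int h(\omega)e^{i\omega t}\,d\omega$, expanding the convolution, applying Fubini under an $L^1$ hypothesis, and substituting $u=\omega-\tau$ factors the double integral into $\bigl(\int f(\tau)e^{i\tau t}d\tau\bigr)\bigl(\int g(u)e^{iut}du\bigr)$, and reconstituting each factor as $2\pi$ times an inverse transform leaves exactly one net factor of $2\pi$, matching Eq.~(\ref{eq6}). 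Your duality shortcut via $\mathcal{F}^{-1}[h](t)=\frac{1}{2\pi}\mathcal{F}[h](-t)$ combined with Theorem~1 gives the same bookkeeping and is equally valid. Two minor points worth flagging: the identity and its constant are convention-dependent (under the symmetric $1/\sqrt{2\pi}$ normalization the factor would differ), so the stated convention should be made explicit; and strictly speaking $f\star g$ in Eq.~(\ref{eq4}) is defined as a function of $t$, so applying $\mathcal{F}^{-1}$ to it silently reinterprets that argument as the frequency variable --- a notational abuse inherited from the paper's own statement rather than an error in your proof.
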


By means of the above two well-known Theorems \cite{mcgillem1991continuous}, i.e., Eqs. (\ref{eq5}) and (\ref{eq6}), the convolution can be generalized to the graph signal as:
\begin{equation}
\label{eq7}
\begin{aligned}
      f(t)\star g(t)=\mathcal{F}^{-1}\{\mathcal{F}[f(t)]\cdot\mathcal{F}[g(t)]\}].
\end{aligned}
\end{equation}
Hence, the convolution operation on a graph can be converted to define the Fourier transform ($\mathcal{F}$) or to find a set of basis functions.

\begin{lemma}
The basis functions of $\mathcal{F}$ can be equivalently represented by a set of eigenvectors of $\mathbf{L}$.
\end{lemma}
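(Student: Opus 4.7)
The plan is to establish the lemma by the standard analogy between the continuous Laplace operator and the graph Laplacian $\mathbf{L}$, since the classical Fourier basis on $\mathbb{R}$ is characterized exactly as the eigenfunctions of the Laplacian. So rather than attempting to derive a Fourier theory on graphs from scratch, I would argue that the natural definition of $\mathcal{F}$ on a graph forces its basis to coincide with the eigenvectors of $\mathbf{L}$.

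First I would recall that the classical Fourier basis on the real line consists of the complex exponentials $e^{-i\omega t}$, and note the elementary identity
\begin{equation*}
-\Delta\, e^{-i\omega t} \;=\; -\frac{\partial^{2}}{\partial t^{2}}e^{-i\omega t} \;=\; \omega^{2}\, e^{-i\omega t},
\end{equation*}
so each Fourier mode is an eigenfunction of the (negative) Laplacian with eigenvalue $\omega^{2}$. Consequently, in the continuous setting the Fourier transform is precisely the change of basis into the eigenbasis of $-\Delta$. This identifies the intrinsic object: the basis functions of $\mathcal{F}$ are the eigenfunctions of the Laplacian.

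Next I would transport this characterization to the graph setting. Since $\mathbf{L}$ (or equivalently $\mathbf{L}_{sym}$ from Eq.~(\ref{eq3})) is the discrete analogue of $-\Delta$ acting on signals indexed by the vertex set $\mathcal{V}$, the graph Fourier basis must be the eigenvectors of $\mathbf{L}$. Here I would invoke the fact that $\mathbf{L}$ is real, symmetric, and positive semi-definite, so by the spectral theorem it admits a decomposition
\begin{equation*}
\mathbf{L} \;=\; \mathbf{U}\, \boldsymbol{\Lambda}\, \mathbf{U}^{\top},
\end{equation*}
where $\mathbf{U}=[\mathbf{u}_{1},\ldots,\mathbf{u}_{n}]$ contains an orthonormal set of eigenvectors and $\boldsymbol{\Lambda}=\mathrm{diag}(\lambda_{1},\ldots,\lambda_{n})$ contains the nonnegative eigenvalues. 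The orthonormality guarantees that $\{\mathbf{u}_{k}\}$ forms a complete basis of the signal space on $\mathcal{V}$, so defining $\mathcal{F}(\mathbf{x}) = \mathbf{U}^{\top}\mathbf{x}$ and $\mathcal{F}^{-1}(\hat{\mathbf{x}}) = \mathbf{U}\hat{\mathbf{x}}$ yields a well-posed, invertible transform whose atoms are exactly the eigenvectors of $\mathbf{L}$.

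The subtle point, and the one I expect to be the main obstacle, is that the result is really a \emph{characterization} rather than a theorem proved from prior axioms: one must justify why the eigenbasis of $\mathbf{L}$ is the \emph{right} choice, not merely \emph{a} choice. I would address this by appealing to the convolution-diagonalization identity from Theorem~1 (Eq.~(\ref{eq5})): requiring convolution to diagonalize under $\mathcal{F}$ forces the basis functions to be joint eigenvectors of all shift-equivariant linear operators on the graph, and these are precisely the polynomials in $\mathbf{L}$; the common eigenbasis of this commutative algebra is the eigenbasis of $\mathbf{L}$ itself. Combined with the spectral decomposition above, this completes the proof.
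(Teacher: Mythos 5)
Your argument is correct in substance but takes a genuinely different route from the paper's. The paper proves the lemma via the Laplace \emph{transform}: it takes a function $y(t)$ violating the Dirichlet condition (\ref{eq8}), damps it by $e^{-\sigma t}$, substitutes into the Fourier integral to obtain $\int y(t)e^{-st}dt$ with $s=\sigma+2\pi i x\xi$ --- i.e., Eq. (\ref{eq10}), the Laplace transform --- and from that identification asserts that the eigenvectors of $\mathbf{L}$ coincide with the basis functions of $\mathcal{F}$. You instead use the eigenfunction characterization: the classical Fourier atoms $e^{-i\omega t}$ are eigenfunctions of $-\partial^{2}/\partial t^{2}$, the graph Laplacian is the discrete analogue of that operator, and the spectral theorem for the real symmetric positive semi-definite $\mathbf{L}$ supplies the orthonormal eigenbasis $\mathbf{U}$, leading directly to the decomposition in Eqs. (\ref{eq11})--(\ref{eq12}). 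Your closing observation --- that requiring convolution to diagonalize under $\mathcal{F}$ forces the basis to be the common eigenbasis of all polynomials in $\mathbf{L}$ --- supplies the ``why this basis and not another'' justification that the paper leaves implicit. Comparing the two: the paper's route stays within the transform-calculus framing of Theorems 1--2 but rests on the nominal link between the Laplace transform kernel $e^{-st}$ and the Laplacian matrix without making the operator-level connection explicit, whereas your route makes that connection explicit and is the standard, more defensible argument in the graph signal processing literature; both ultimately treat the lemma as a definition-by-analogy, which you acknowledge openly and the paper does not.
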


\begin{proof}
By referring to \cite{mcgillem1991continuous}, we have the following proof. For many functions that do not converge in domain, e.g., $y(t)=t^{2}$, we can always find a real-valued exponential function $e^{-\sigma t}$ to make $y(t)e^{-\sigma t}$ converge, thereby satisfying the Dirichlet condition of $\mathcal{F}$, i.e.,
\begin{equation}
\label{eq8}
\begin{aligned}
      \int_{-\infty }^{\infty}|y(t)e^{-\sigma t}|dt < \infty.
\end{aligned}
\end{equation}
Plugging $y(t)e^{-\sigma t}$ into $\mathcal{F}$, we have: 
\begin{equation}
\label{eq9}
\begin{aligned}
      \int_{-\infty }^{\infty}y(t)e^{-\sigma t}e^{-2\pi ix\xi}dt,
\end{aligned}
\end{equation}
and we can rewrite Eq. (\ref{eq9}) as:
\begin{equation}
\label{eq10}
\begin{aligned}
      \int_{-\infty }^{\infty}y(t)e^{-st}dt,
\end{aligned}
\end{equation}
where $s=\sigma+2\pi ix\xi$. Note that Eq. (\ref{eq10}) is the Laplace transform. In other words, the eigenvectors of $\mathbf{L}$ are identical to the basis functions of $\mathcal{F}$.
\end{proof}

Given \textbf{Lemma 1}, we can perform spectral decomposition on $\mathbf{L}$. We then have:
\begin{equation}
\label{eq11}
\begin{aligned}
      \mathbf{L}=\mathbf{U}\mathbf{\Lambda}\mathbf{U}^{-1},
\end{aligned}
\end{equation}
where $\mathbf{U}=(\mathbf{u}_{1}, \mathbf{u}_{2},\dots, \mathbf{u}_{n})$ is the set of eigenvectors of $\mathbf{L}$, that is, the basis of $\mathcal{F}$. As $\mathbf{U}$ is the orthogonal matrix, i.e., $\mathbf{U}\mathbf{U}^{\top}=\mathbf{E}$, Eq. (\ref{eq11}) can be also written as:
\begin{equation}
\label{eq12}
\begin{aligned}
      \mathbf{L}=\mathbf{U}\mathbf{\Lambda}\mathbf{U}^{-1}=\mathbf{U}\mathbf{\Lambda}\mathbf{U}^{\top}.
\end{aligned}
\end{equation}
According to Eq. (\ref{eq12}), $\mathcal{F}$ of $f$ on a graph is $\mathcal{GF}[f]=\mathbf{U}^{\top}f$, and the inverse transform becomes $f=\mathbf{U}\mathcal{GF}[f]$. In analogy with Eq. (\ref{eq7}), the convolution between $f$ and $g$ on a graph can be expressed as:
\begin{equation}
\label{eq13}
\begin{aligned}
      \mathcal{G}[f\star g]=\mathbf{U}\{[\mathbf{U}^{\top}f]\cdot[\mathbf{U}^{\top}g]\}.
\end{aligned}
\end{equation}
If we write $\mathbf{U}^{\top}g$ as $g_{\theta}$, the convolution on a graph can be finally formulated as: 
\begin{equation}
\label{eq14}
\begin{aligned}
      \mathcal{G}[f\star g_{\theta}]=\mathbf{U}g_{\theta}\mathbf{U}^{\top}f,
\end{aligned}
\end{equation}
where $g_{\theta}$ can be regarded as the function of the eigenvalues ($\mathbf{\Lambda}$) of $\mathbf{L}$ with the respect to the variable $\theta$, i.e., $g_{\theta}(\mathbf{\Lambda})$.

To reduce the computational complexity of Eq. (\ref{eq14}), Hammond \textit{et al.} \cite{hammond2011wavelets} approximately fitted $g_{\theta}$ by applying the $K$-th order truncated expansion of Chebyshev polynomials. By doing so, Eq. (\ref{eq14}) can be rewritten as: 
\begin{equation}
\label{eq15}
\begin{aligned}
      \mathcal{G}[f\star g_{\theta}]\approx \sum_{k=0}^{K}\theta_{k}^{'}T_{k}(\mathbf{\widetilde{L}})f,
\end{aligned}
\end{equation}
where $T_{k}(\bullet)$ and $\theta_{k}^{'}$ are the Chebyshev polynomials with respect to the variable $\bullet$ and the Chebyshev coefficients, respectively. $\mathbf{\widetilde{L}}=\frac{2}{\lambda_{\max}}\mathbf{L}_{sym}-\mathbf{I}$ denotes the normalized $\mathbf{L}$.

\begin{figure*}[!t]
	  \centering
		\subfigure{
			\includegraphics[width=0.7\textwidth]{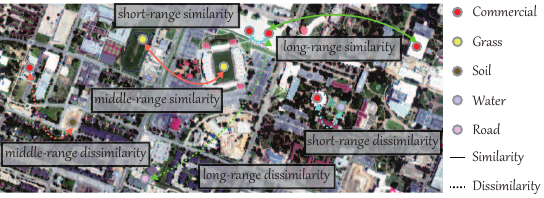}
		}
        \caption{An illustration of short-range, middle-range, and long-range spatial relations in a HS image. CNNs tend to extract locally spatial information, while GCNs are capable of capturing middle-range or long-range spatial relationships (either similarities or dissimilarities) between samples.}
\label{fig:spatial_relation}
\end{figure*}

By limiting $K=1$ and assigning the largest eigenvalue $\lambda_{\max}$ of $\mathbf{\widetilde{L}}$ to 2 \cite{kipf2016semi}, Eq. (\ref{eq15}) can be further simplified to: 
\begin{equation}
\label{eq16}
\begin{aligned}
      \mathcal{G}[f\star g_{\theta}]\approx \mathbf{\theta}(\mathbf{I}+\mathbf{D}^{-\frac{1}{2}}\mathbf{A}\mathbf{D}^{-\frac{1}{2}})f.
\end{aligned}
\end{equation}

Using Eq. (\ref{eq16}) we have the following propagation rule for GCNs:
\begin{equation}
\label{eq17}
\begin{aligned}
      \mathbf{H}^{(\ell+1)}=h(\mathbf{\widetilde{D}}^{-\frac{1}{2}}\mathbf{\widetilde{A}}\mathbf{\widetilde{D}}^{-\frac{1}{2}}\mathbf{H}^{(\ell)}\mathbf{W}^{(\ell)}+\mathbf{b}^{(\ell)}),
\end{aligned}
\end{equation}
where $\mathbf{\widetilde{\mathbf{A}}}=\mathbf{A}+\mathbf{I}$ and $\mathbf{\widetilde{D}}_{i,i}=\sum_{j}\mathbf{\widetilde{\mathbf{A}}}_{i,j}$ are defined as the renormalization terms of $\mathbf{A}$ and $\mathbf{D}$, respectively, to enhance stability in the process of network training. Moreover, $\mathbf{H}^{(\ell)}$ denotes the output in the $\ell^{th}$ layer and $h(\bullet)$ is the activation function (e.g., ReLU, used in our case) with respect to the weights to-be-learned  $\{\mathbf{W}^{(\ell)}\}_{\ell=1}^{p}$ and the biases $\{\mathbf{b}^{(\ell)}\}_{\ell=1}^{p}$ of all layers ($\ell=1,2,\dots,p$).

\section{Methodology}
In this section, we systematically analyze CNNs and GCNs from four different perspectives and further develop an improvement of existing GCNs called miniGCNs, making them better applicable to the HS image classification task. Finally, we introduce three different fusion strategies, yielding a general end-to-end fusion network.

\subsection{CNNs versus GCNs: Qualitative Comparison}

\subsubsection{Data Preparation}
It is well known that the input of CNNs is patch-wise in HS image classification, and the output is the set of one-shot encoded labels. Unlike CNNs, GCNs feed pixel-wise samples into the network with an adjacency matrix that models the relations between samples and needs to be computed before the training process starts.

\subsubsection{Feature Representation} CNNs can extract rich spatial and spectral information from HS images in a short-range region, while GCNs are capable of modeling middle-range and long-range spatial relations between samples by means of their graph structure. Fig. \ref{fig:spatial_relation} illustrates such short-range, middle-range, and long-range relations in a HS scene.

\subsubsection{Network Training}
\label{a3}

CNNs, as the main member of the DL family, are normally trained through the use of mini-batch strategies. Conversely, GCNs only allow for full-batch network training, since all samples need to be simultaneously fed into the network.  

\subsubsection{Computational Cost}
\label{a4}
The computational cost of CNNs and GCNs in one layer is mainly dominated by matrix products, yielding an overall $\mathcal{O}(NDP)$ and
$\mathcal{O}(NDP+N^{2}D)$, respectively. $N$, $D$ and $P$ denote the sample number, and the dimensions of the input and output features, respectively. Evidently, GCNs are computationally complex for large graphs as compared to CNNs due to the large-sized matrix multiplication. To this end, a feasible solution might be the mini-batch strategy performed in GCNs. If possible, the complexity of GCNs can be greatly reduced to $\mathcal{O}(NDP+NMD)$, where $M\ll N$ denotes the size of mini-batches, thus having approximately same order as CNNs with respect to $N$.

\begin{figure}[!t]
	  \centering
		\subfigure{
			\includegraphics[width=0.4\textwidth]{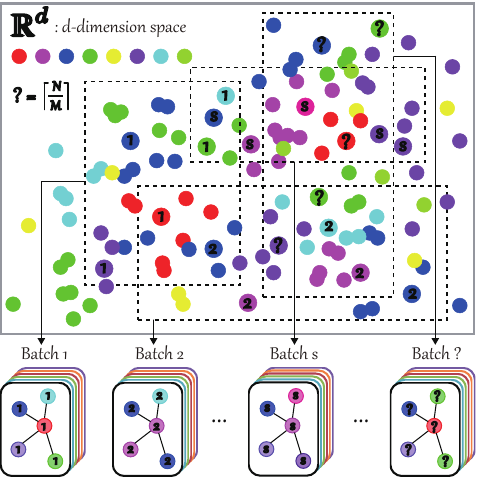}
		}
        \caption{An example illustrating how miniGCNs sample the subgraphs (or batches) from a full graph $\mathcal{G}$, aiming at training networks in a mini-batch fashion. The solid circles in different colors denote spectral signatures of different classes in high-dimensional feature space, while the dashed boxes represent random sampling regions for each batch.}
\label{fig:miniGCNs}
\end{figure}

\begin{figure*}[!t]
	  \centering
		\subfigure{
			\includegraphics[width=0.98\textwidth]{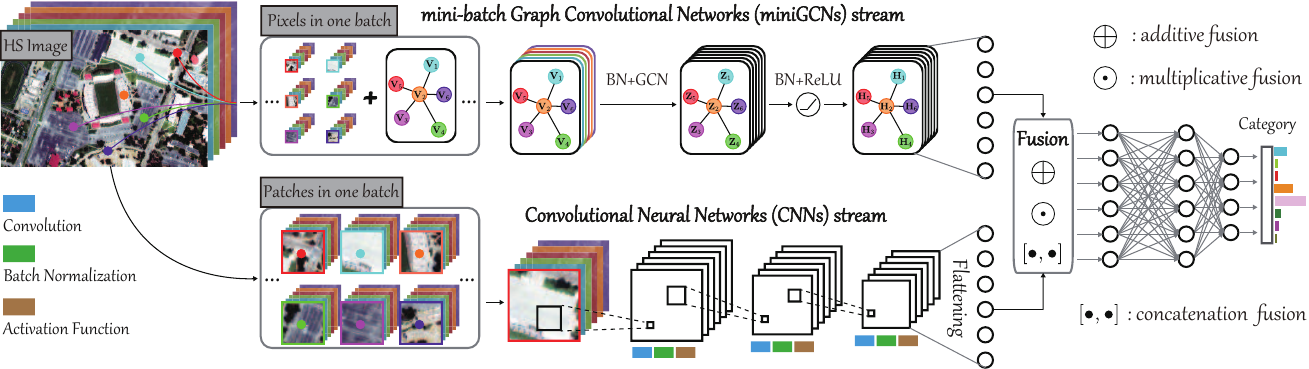}
		}
        \caption{An overview of our end-to-end fusion network (FuNet), illustrating one batch training iteration. It comprises feature extraction and fusion modules, where the former can extract different kinds of features (using both CNNs and miniGCNs) and the latter combines the resulting features using different fusion strategies before the final classification.}
\label{fig:fusionNet}
\end{figure*}

\subsection{Proposed MiniGCNs}
According to subsections \ref{a3} and \ref{a4}, the computational cost of GCNs becomes high with an increase in the size of the graphs. To circumvent the computational burden on large graphs, a feasible solution (in analogy to CNNs) is to use mini-batch processing. Inspired by inductive learning \cite{michalski1983theory}, we propose miniGCNs, making GCNs trainable in a mini-batch fashion. Note that our inductive setting neither exploits features nor graph information of testing nodes in the training process.

Before presenting the new update rule of graph convolution in the proposed miniGCNs, we first cast a proposition -- proved in \cite{zeng2019graphsaint} -- to theoretically guarantee the applicability of the mini-batch training strategy used in our miniGCNs. Given a full graph $\mathcal{G}$ with $|\mathcal{V}|=N$ on the labeled set, we construct a random node sampler with a budget $M$ ($M\ll N$). Before training each epoch, we repeatedly apply the sampler to $\mathcal{G}$ until each node is sampled, yielding a set of subgraphs $\mathbb{G}=\{\mathcal{G}_s=(\mathcal{V}_s,\mathcal{E}_s)|s=1,\ldots,\ceil*{\frac{N}{M}}\}$, where $\ceil*\bullet$ denotes the ceiling operation.

\begin{proposition}
Given a node $v$ sampled from a certain subgraph $\mathcal{V}_{s}$, i.e., $v\in\mathcal{V}_s$, an unbiased estimator of the node $v$ in the full-batch $(\ell+1)^{th}$ GCN layer, denoted as $\mathbf{z}_{v}^{(\ell+1)}$, can be computed by aggregating features between $v$ and all nodes $u\in\mathcal{V}_s$ in the $\ell^{th}$ layer:
\begin{equation}
\label{eq18}
\begin{aligned}
    \mathbf{z}^{(\ell+1)}_v=\sum_{u\in\mathcal{V}_s}\frac{(\mathbf{\widetilde{D}}^{-\frac{1}{2}}\mathbf{\widetilde{A}}\mathbf{\widetilde{D}}^{-\frac{1}{2}})_{uv}}{e_{uv}}\mathbf{z}^{(\ell)}_u\mathbf{W}^{(\ell)}+\mathbf{b}_{u}^{(\ell)},
\end{aligned}
\end{equation}
i.e., 
$\mathbb{E}(\mathbf{z}^{(\ell+1)}_v)=\sum_{u\in\mathcal{V}}(\mathbf{\widetilde{D}}^{-\frac{1}{2}}\mathbf{\widetilde{A}}\mathbf{\widetilde{D}}^{-\frac{1}{2}})_{uv}\mathbf{z}^{(\ell)}_u\mathbf{W}^{(\ell)}+\mathbf{b}^{(\ell)}$, if the constant of normalization $e_{uv}$ is set to $C_{uv}/{C_v}$, where $C_{uv}$ and $C_v$ are defined as the number of times that node or edge occurs in all sampled subgraphs.
\end{proposition}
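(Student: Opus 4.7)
The plan is to derive the identity by a direct expectation computation over the sampler's randomness, reducing the claim to an elementary bookkeeping identity linking the weights $e_{uv}$ and the conditional inclusion probabilities induced by the random node sampler.

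First I would rewrite the mini-batch aggregator in Eq.~(\ref{eq18}) as a sum over the full vertex set by introducing an indicator function: since $v\in\mathcal{V}_s$ by hypothesis, the partial sum $\sum_{u\in\mathcal{V}_s}$ equals $\sum_{u\in\mathcal{V}}\mathbbm{1}[u\in\mathcal{V}_s]$ applied to the same summand (noting that $(\mathbf{\widetilde{D}}^{-1/2}\mathbf{\widetilde{A}}\mathbf{\widetilde{D}}^{-1/2})_{uv}$ already vanishes when $(u,v)$ is not a full-graph edge, so only the $u$'s retained by the sampler and adjacent to $v$ actually contribute). Taking $\mathbb{E}[\,\cdot\mid v\in\mathcal{V}_s]$ and pushing the expectation inside the sum by linearity, each indicator collapses to the conditional inclusion probability $p_{uv}=\mathbb{P}[u\in\mathcal{V}_s\mid v\in\mathcal{V}_s]$.

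Next I would match the resulting expression term-by-term with the target right-hand side $\sum_{u\in\mathcal{V}}(\mathbf{\widetilde{D}}^{-1/2}\mathbf{\widetilde{A}}\mathbf{\widetilde{D}}^{-1/2})_{uv}\mathbf{z}^{(\ell)}_u\mathbf{W}^{(\ell)}+\mathbf{b}^{(\ell)}$. For the coefficient of each $\mathbf{z}^{(\ell)}_u\mathbf{W}^{(\ell)}$ to reduce cleanly to $(\mathbf{\widetilde{D}}^{-1/2}\mathbf{\widetilde{A}}\mathbf{\widetilde{D}}^{-1/2})_{uv}$, the normalizer must satisfy $p_{uv}/e_{uv}=1$, i.e.\ $e_{uv}=p_{uv}$. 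The proposed choice $e_{uv}=C_{uv}/C_v$ is exactly the empirical (Monte-Carlo) estimate of $p_{uv}$ furnished by the realized subgraph collection $\mathbb{G}$: $C_v$ counts how often $v$ appears and $C_{uv}$ how often $(u,v)$ is co-retained. By the law of large numbers --- or, equivalently, by interpreting $\mathbb{G}$ itself as the empirical distribution on subgraphs --- $C_{uv}/C_v$ converges to $p_{uv}$, and substituting it as the normalizer delivers the claimed unbiasedness. The bias term $\mathbf{b}^{(\ell)}$ is purely additive and requires no special treatment.

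The main obstacle I anticipate is the rigorous identification of the discrete counts with a true probability. Two routes are available: either invoke the construction of Zeng \textit{et al.}~and treat $\mathbb{G}$ as an empirical distribution under which the estimator is \emph{exactly} unbiased on the realized sample, or phrase the argument asymptotically and appeal to the law of large numbers as the number of drawn subgraphs grows. One must additionally check that ``repeatedly apply the sampler until every node is sampled'' leaves $p_{uv}>0$ for every full-graph edge incident to $v$, so that $P_{uv}/e_{uv}$ is well defined on the support; otherwise the division in Eq.~(\ref{eq18}) must be read as $0/0\equiv 0$, which is consistent but worth stating explicitly.
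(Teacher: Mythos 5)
The paper itself contains no proof of this proposition: it is stated with a pointer to Zeng \emph{et al.} (GraphSAINT), and the text immediately moves on to the special case $C_{uv}=C_v=1$. So there is no in-paper argument to compare against; judged on its own, your reconstruction is essentially the argument of the cited reference and is sound. The core steps --- extending the sum over $\mathcal{V}_s$ to a sum over $\mathcal{V}$ via an indicator, conditioning on $v\in\mathcal{V}_s$ so the indicator collapses to $p_{uv}=\mathbb{P}[u\in\mathcal{V}_s\mid v\in\mathcal{V}_s]$, and requiring $e_{uv}=p_{uv}$ for the coefficients to match --- are exactly how the unbiasedness is established, and your identification of $C_{uv}/C_v$ as the empirical estimate of $p_{uv}=p_{u,v}/p_v$ (with the empirical-distribution versus law-of-large-numbers dichotomy, and the support caveat when $p_{uv}=0$) is the honest way to state what the counts actually deliver. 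The one assumption you leave implicit, which is also implicit in the original, is that the previous-layer features $\mathbf{z}_u^{(\ell)}$ are treated as fixed (exact, full-batch) quantities so that the expectation is taken only over the current layer's node sampling; without that, unbiasedness would not compose across layers. It would also be worth remarking that the paper's own instantiation $e_{uv}=1$ satisfies the unbiasedness condition only when $p_{uv}=1$, so the proposition justifies the general scheme rather than the specific setting the paper ultimately adopts --- but that is a criticism of the paper's usage, not of your proof.
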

With \textbf{Proposition 1} in mind, our miniGCNs can perform graph convolution in batches, just like CNNs. Using Eq. (\ref{eq17}), the update rule in one batch can be directly given by:
\begin{equation}
\label{eq19}
\begin{aligned}
     \mathbf{\widetilde{H}}_{s}^{(\ell+1)}=
        h(\mathbf{\widetilde{D}}_{s}^{-\frac{1}{2}}\mathbf{\widetilde{A}}_{s}\mathbf{\widetilde{D}}_{s}^{-\frac{1}{2}}\mathbf{\widetilde{H}}_{s}^{(\ell)}\mathbf{W}^{(\ell)}+\mathbf{b}_{s}^{(\ell)}),
\end{aligned}
\end{equation}
where $s$ is not only the $s^{th}$ subgraph, but also the $s^{th}$ batch in the network training. Note that we consider a special case of \textbf{Proposition 1}: random node sampling without replacement, by simply setting $C_{uv}=C_v=1$, i.e., $e_{uv}=1$. 

By collecting the outputs of all batches, the final output in the $(\ell+1)^{th}$ layer can be reformulated as:
\begin{equation}
\label{eq20}
\begin{aligned}
      \mathbf{H}^{(\ell+1)}=[\mathbf{\widetilde{H}}_{1}^{(\ell+1)}, \cdots,\mathbf{\widetilde{H}}_{s}^{(\ell+1)},\cdots, \mathbf{\widetilde{H}}_{\ceil*{\frac{N}{M}}}^{(\ell+1)}].
\end{aligned}
\end{equation}
Fig. \ref{fig:miniGCNs} illustrates the process of batch generation in the proposed miniGCNs. This batch process is similar to the one adopted in CNNs, and the main difference lies in the fact that the graph or adjacency matrix in the obtained batch needs to be reassembled according to the connectivity of $\mathcal{G}$ after each sampling. 


\subsection{MiniGCNs meet CNNs: End-to-end Fusion Networks}
Different network architectures are capable of extracting distinctive representations of HS images, e.g., spatial-spectral features in CNNs or topological relations between samples in GCNs. Generally speaking, a single model may not provide optimal results in terms of performance due to the lack of feature diversity. 

In this subsection, we naturally propose to fuse different models or features to enhance feature discrimination ability by jointly training CNNs and GCNs. Unlike traditional GCNs, the proposed miniGCNs can perform mini-batch learning and be combined with standard CNN models. This yields an end-to-end fusion network, called FuNet hereinafter. Three fusion strategies: additive (A), element-wise multiplicative (M), and concatenation (C) are considered. The three fusion models (A, M, and C) can be respectively formulated as follows:
\begin{equation}
\label{eq21}
\begin{aligned}
      \mathbf{H}_{FuNet-A}^{(\ell+1)}=\mathbf{H}_{CNNs}^{(\ell)} \oplus \mathbf{H}_{miniGCNs}^{(\ell)},
\end{aligned}
\end{equation}
\begin{equation}
\label{eq22}
\begin{aligned}
      \mathbf{H}_{FuNet-M}^{(\ell+1)}=\mathbf{H}_{CNNs}^{(\ell)} \odot \mathbf{H}_{miniGCNs}^{(\ell)},
\end{aligned}
\end{equation}
\begin{equation}
\label{eq23}
\begin{aligned}
      \mathbf{H}_{FuNet-C}^{(\ell+1)}=[\mathbf{H}_{CNNs}^{(\ell)}, \mathbf{H}_{miniGCNs}^{(\ell)}],
\end{aligned}
\end{equation}
where the operators $\oplus$, $\odot$, and $[\cdot,\cdot]$ respectively denote the element-wise addition, element-wise multiplication, and concatenation. Accordingly, $\mathbf{H}_{CNNs}^{(\ell)}$ and $\mathbf{H}_{miniGCNs}^{(\ell)}$ are represented as the $\ell^{th}$ layer features extracted from CNNs and miniGCNs, respectively.

Fig. \ref{fig:fusionNet} illustrates one batch training iteration of CNNs and miniGCNs in our newly proposed end-to-end fusion networks. As it can be seen, it comprises feature extraction and fusion modules, where the former can extract different kinds of features (using both CNNs and miniGCNs) and the latter combines the resulting features using different fusion strategies before the final classification.

\begin{table}[!t]
\centering
\caption{Land-cover classes of the Indian Pines dataset, with the number of training and test samples shown for each class.}
\begin{tabular}{c||ccc}
\toprule[1.5pt]
Class No.&Class Name&Training&Testing\\
\hline \hline 1&Corn Notill&50&1384\\
 2&Corn Mintill&50&784\\
 3&Corn&50&184\\
 4&Grass Pasture&50&447\\
 5&Grass Trees&50&697\\
 6&Hay Windrowed&50&439\\
 7&Soybean Notill&50&918\\
 8&Soybean Mintill&50&2418\\
 9&Soybean Clean&50&564\\
 10&Wheat&50&162\\
 11&Woods&50&1244\\
 12&Buildings Grass Trees Drives&50&330\\
 13&Stone Steel Towers&50&45\\
 14&Alfalfa&15&39\\
 15&Grass Pasture Mowed&15&11\\
 16&Oats&15&5\\
\hline \hline &Total&695&9671\\
\bottomrule[1.5pt]
\end{tabular}
\label{Table:Indian}
\end{table}

\begin{table}[!t]
\centering
\caption{Land-cover classes of the Pavia University dataset, with the number of training and test samples shown for each class.}
\begin{tabular}{c||ccc}
\toprule[1.5pt]
Class No.&Class Name&Training&Testing\\
\hline \hline 1&Asphalt&548&6304\\
 2&Meadows&540&18146\\
 3&Gravel&392&1815\\
 4&Trees&524&2912\\
 5&Metal Sheets&265&1113\\
 6&Bare Soil&532&4572\\
 7&Bitumen&375&981\\
 8&Bricks&514&3364\\
 9&Shadows&231&795\\
\hline \hline &Total&3921&40002\\
\bottomrule[1.5pt]
\end{tabular}
\label{Table:Pavia}
\end{table}

\begin{table}[!t]
\centering
\caption{Land-cover classes of the Houston2013 dataset, with the number of training and test samples shown for each class.}
\begin{tabular}{c||ccc}
\toprule[1.5pt]
Class No.&Class Name&Training&Testing\\
\hline \hline 1&Healthy Grass&198&1053\\
 2&Stressed Grass&190&1064\\
 3&Synthetic Grass&192&505\\
 4&Tree&188&1056\\
 5&Soil&186&1056\\
 6&Water&182&143\\
 7&Residential&196&1072\\
 8&Commercial&191&1053\\
 9&Road&193&1059\\
 10&Highway&191&1036\\
 11&Railway&181&1054\\
 12&Parking Lot1&192&1041\\
 13&Parking Lot2&184&285\\
 14&Tennis Court&181&247\\
 15&Running Track&187&473\\
\hline \hline &Total&2832&12197\\
\bottomrule[1.5pt]
\end{tabular}
\label{Table:H2013}
\end{table}

\section{Experiments}

\subsection{Data Description}
Three widely used HS datasets are adopted to assess the classification performance of our proposed algorithms, both quantitatively and qualitatively. 

\subsubsection{Indian Pines Dataset} The first HS dataset was acquired by the Airborne Visible/Infrared Imaging Spectrometer (AVIRIS) sensor over northwestern Indiana, USA. The scene comprises of $145\times 145$ pixels with a ground sampling distance (GSD) of 20 $m$ and 220 spectral bands in the wavelength range from 400 $nm$ to 2500 $nm$, at 10 $nm$ spectral resolution. We retain 200 channels by removing 20 noisy and water absorption bands, i.e., 104-108, 150-163, and 220. Table \ref{Table:Indian} lists 16 main land-cover categories involved in this studied scene, as well as the number of training and testing samples used for the classification task. Correspondingly, Fig. \ref{fig:data} shows a false-color image of this scene and the spatial distribution of training and test samples.

\subsubsection{Pavia University Dataset} The second HS scene is the well-known Pavia University, which was acquired by the Reflective Optics System Imaging Spectrometer (ROSIS) sensor. The ROSIS sensor acquired 103 bands covering the spectral range from $430nm$ to $860nm$, and the scene consists of $610\times 340$ pixels at GSD of 1.3 $m$. Moreover, there are 9 land cover classes in the scene. The class name and the number of training and test sets are detailed in Table \ref{Table:Pavia}, while the distribution of these samples is shown in Fig. \ref{fig:data}.

\subsubsection{Houston2013 Dataset} This dataset was used for the 2013 IEEE GRSS data fusion contest\footnote{http://www.grss-ieee.org/community/technical-committees/data-fusion/2013-ieee-grss-data-fusion-contest/}, and was collected using the ITRES CASI-1500 sensor over the campus of University of Houston and its surrounding rural areas in Texas, USA. The image size is $349\times 1905$ pixels with $144$ spectral bands ranging from 364 $nm$ to 1046 $nm$, at 10 $nm$ spectral resolution. It should be noted that the used dataset is a cloud-free HS product, processed by removing some small structures according to the illumination-related threshold maps computed based on the spectral signatures\footnote{The data were provided by Prof. N. Yokoya from the University of Tokyo.}. Table \ref{Table:H2013} lists 15 challenging land-cover categories and the training and test sets. In Fig. \ref{fig:data}, we show a false-color image of the HS scene and the corresponding distribution of the training and test samples.

\begin{figure}[!t]
	  \centering
		\subfigure{
			\includegraphics[width=0.48\textwidth]{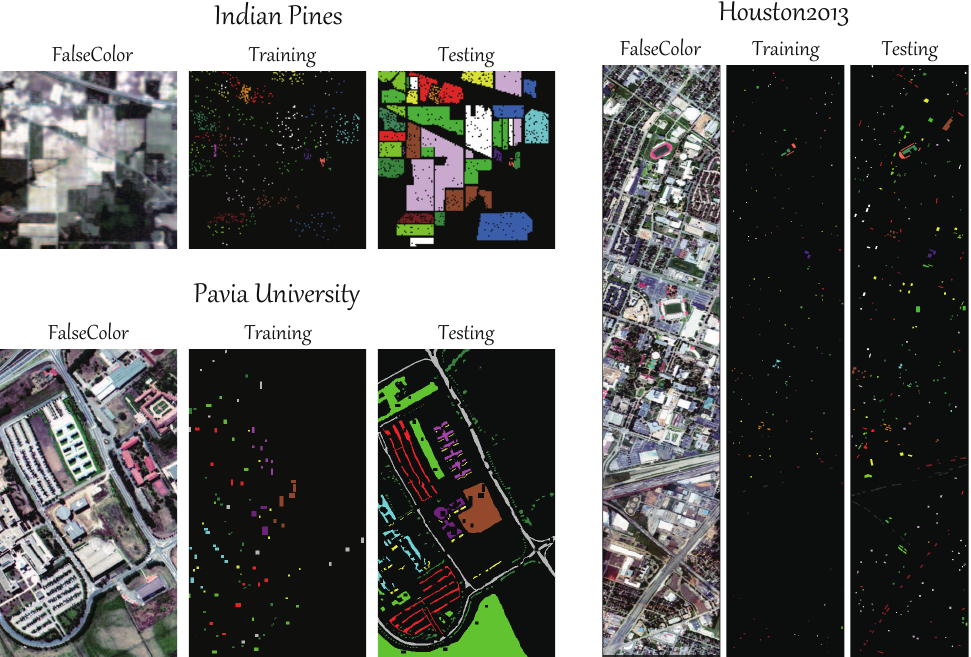}
		}
        \caption{False-color images and the distribution of training and test sets on the three considered datasets, i.e., Indian Pines, Pavia University, and Houston2013.}
\label{fig:data}
\end{figure}

\subsection{Experimental Setup}

\subsubsection{Implementation details}
All networks considered in this paper are implemented usig the Tensorflow platform, and Adam \cite{kingma2014adam} is used to optimize the networks.  By following the ``exponential'' learning rate policy, the current learning rate can be dynamically updated by multiplying a base learning rate (e.g., 0.001) by $(1-\frac{iter}{maxIter})^{0.5}$ at intervals of 50 epochs. In the process of network training, the maximum number of epochs is set to 200. Batch normalization (BN) \cite{ioffe2015batch} is adopted with the 0.9 momentum, and the batch size in the training phase is set to 32. Moreover, the $\ell_2$-norm regularization, set to 0.001, is employed on weights to stabilize the network training and reduce overfitting.

Note that the size for each layer and the hyperparameters in networks, such as learning rate and regularization, can be determined by 10-fold cross-validation, e.g., using a grid search on the validation set. Ten replications are performed to randomly separate the original training set into the new training set and validation set, with a percentage of 80\%-20\%. More specifically, we perform cross-validation to select the size of each layer and hyperparameters in the range of $\{16, 32, 64, 128, 256\}$ and $\{0.0001, 0.001, 0.01, 0.1, 1\}$, respectively. More details regarding the parameter settings can refer to our toolbox (or codes) that will be released after publication.

Furthermore, three commonly-used indices, i.e., \textit{Overall Accuracy (OA)}, \textit{Average Accuracy (AA)}, \textit{Kappa Coefficient ($\kappa$)}, are used to evaluate the classification performance quantitatively.

\begin{table}[!t]
\centering
\caption{General network configuration in each layer of our FuNet. FC, Conv, and MaxPool stand for fully connected, convolution, and max pooling, respectively, while $D$ and $P$ denote the input and output dimension in the networks, respectively. Furthermore, the last component in each block represents the output size.}
\resizebox{0.48\textwidth}{!}{ 
\begin{tabular}{c||c|c|c}
\toprule[1.5pt]
\multicolumn{2}{c|}{End-to-end Fusion Networks (FuNet)} & CNNs & miniGCNs\\
\hline 
\multicolumn{2}{c|}{Input Dimension} & $7\times 7\times D$ & $D$\\
\hline 
\multirow{15}{*}{Feature Extraction} & \multirow{5}{*}{Block1} & $3\times 3$ Conv & BN\\
& & BN & Graph Conv\\
& & $2\times 2$ MaxPool & BN\\
& & ReLU & ReLU\\
& & $4\times 4\times 32$ & $128$\\
\cline{2-4}
& \multirow{5}{*}{Block2} & $3\times 3$ Conv & --\\
& & BN & --\\
& & $2\times 2$ MaxPool & --\\
& & ReLU & --\\
& & $2\times 2\times 64$ & --\\
\cline{2-4}
& \multirow{5}{*}{Block3} & $1\times 1$ Conv & --\\
& & BN & --\\
& & $2\times 2$ MaxPool & --\\
& & ReLU & --\\
& & $1\times 1\times 128$ & --\\
\hline \hline
\multirow{7}{*}{Feature Fusion} & \multirow{4}{*}{Block4} & \multicolumn{2}{c}{FC Encoder} \\
& & \multicolumn{2}{c}{BN}\\
& & \multicolumn{2}{c}{ReLU}\\
& & \multicolumn{2}{c}{$128$}\\
\cline{2-4}
& \multirow{3}{*}{Block5} & \multicolumn{2}{c}{FC Encoder} \\
& & \multicolumn{2}{c}{Softmax}\\
& & \multicolumn{2}{c}{$P$}\\
\hline
\multicolumn{2}{c|}{Ouput Dimension} & \multicolumn{2}{c}{$P$}\\
\bottomrule[1.5pt]
\end{tabular}
}
\label{tab:network_configuration}
\end{table}

\begin{table*}[!t]
\centering
\caption{Quantitative comparison of different algorithms in terms of OA, AA, and $\kappa$ on the Indian Pines dataset. The best one is shown in bold.}
\begin{tabular}{c||ccccccc|cccc}
\toprule[1.5pt] Class No. & KNN & RF & SVM & 1-D CNN & 2-D CNN & 3-D CNN & GCN & miniGCN & FuNet-A & FuNet-M & FuNet-C\\
\hline \hline
1 & 45.45 & 57.80 & 67.34 & 47.83 & 65.90 & 66.26 & 65.97 & \bf 72.54 & 68.64 & 69.51 & 68.50\\
2 & 46.94 & 56.51 & 67.86 & 42.35 & 76.66 & 71.94 & 72.70 & 55.99 & 80.99 & \bf 82.40 & 79.59\\
3 & 77.72 & 80.98 & 93.48 & 60.87 & 92.39 & 97.28 & 87.50 & 92.93 & 95.11 & 94.57 & \bf 99.46\\
4 & 84.56 & 85.68 & 94.63 & 89.49 & 93.96 & 95.08 & 93.74 & 92.62 & \bf 96.64 & 96.42 & 95.08\\
5 & 80.06 & 79.34 & 88.52 & 92.40 & 87.23 & 88.09 & 91.39 & 94.98 & 95.41 & \bf 96.99 & 95.70\\
6 & 97.49 & 95.44 & 94.76 & 97.04 & 97.27 & 98.18 & 97.49 & 98.63 & 99.32 & \bf 99.54 & \bf 99.54\\
7 & 64.81 & \bf 77.56 & 73.86 & 59.69 & 77.23 & 75.38 & 75.38 & 64.71 & 72.98 & 76.80 & 75.93\\
8 & 48.68 & 58.85 & 52.07 & 65.38 & 57.03 & 56.29 & 51.70 & 68.78 & \bf 70.31 & 58.97 & 68.90\\
9 & 44.33 & 62.23 & 72.70 & \bf 93.44 & 72.87 & 78.01 & 62.77 & 69.33 & 74.82 & 74.82 & 71.63\\
10 & 96.30 & 95.06 & 98.77 & 99.38 & \bf 100.00 & \bf 100.00 & 96.91 & 98.77 & 99.38 & 99.38 & 99.38\\
11 & 74.28 & 88.75 & 86.17 & 84.00 & \bf 92.85 & 90.59 & 86.25 & 87.78 & 85.93 & 79.50 & 89.55\\
12 & 15.45 & 54.24 & 71.82 & 86.06 & 88.18 & 90.30 & 66.97 & 50.00 & \bf 93.03 & 91.21 & 91.52\\
13 & 91.11 & 97.78 & 95.56 & 91.11 & \bf 100.00 & \bf 100.00 & 95.56 & \bf 100.00 & \bf 100.00 & \bf 100.00 & \bf 100.00\\
14 & 33.33 & 56.41 & 82.05 & 84.62 & 84.62 & 74.36 & 71.79 & 48.72 & 79.49 & 82.05 & \bf 94.87\\
15 & 81.82 & 81.82 & 90.91 & \bf 100.00 & \bf 100.00 & \bf 100.00 & 81.82 & 72.73 & \bf 100.00 & \bf 100.00 & \bf 100.00\\
16 & 40.00 & \bf 100.00 & \bf 100.00 & 80.00 & \bf 100.00 & \bf 100.00 & \bf 100.00 & 80.00 & \bf 100.00 & \bf 100.00 & \bf 100.00\\
\hline \hline
OA (\%) & 59.17 & 69.80 & 72.36 & 70.43 & 75.89 & 75.48 & 71.97 & 75.11 & 79.76 & 76.76 & \bf 79.89\\
AA (\%) & 63.90 & 76.78 & 83.16 & 79.60 & 86.64 & 86.36 & 81.12 & 78.03 & 88.25 & 87.64 & \bf 89.35\\
$\kappa$ & 0.5395 & 0.6591 & 0.6888 & 0.6642 & 0.7281 & 0.7240 & 0.6852 & 0.7164 & 0.7698 & 0.7382 & \bf 0.7716\\
\bottomrule[1.5pt]
\end{tabular}
\label{tab:IP}
\end{table*}

\begin{table*}[!t]
\centering
\caption{Quantitative performance comparison of different algorithms in terms of OA, AA, and $\kappa$ on the Pavia University dataset. The best one is shown in bold.}
\begin{tabular}{c||ccccccc|cccc}
\toprule[1.5pt] Class No. & KNN & RF & SVM & 1-D CNN & 2-D CNN & 3-D CNN & GCN & miniGCN & FuNet-A & FuNet-M & FuNet-C\\
\hline \hline
1 & 73.86 & 79.81 & 74.22 & 88.90 & 80.98 & 80.69 & 76.49 & 96.35 & \bf 96.99 & 96.47 & 96.67\\
2 & 64.31 & 54.90 & 52.79 & 58.81 & 81.70 & 89.12 & 70.15 & 89.43 & \bf 97.74 & 97.36 & 97.60\\
3 & 55.10 & 46.34 & 65.45 & 73.11 & 67.99 & 65.90 & 62.70 & \bf 87.01 & 83.98 & 83.44 & 84.49\\
4 & 94.95 & \bf 98.73 & 97.42 & 82.07 & 97.36 & 98.45 & 98.35 & 94.26 & 96.45 & 84.40 & 89.95\\
5 & 99.19 & 99.01 & 99.46 & 99.46 & 99.64 & 99.19 & 99.37 & 99.82 & 99.55 & \bf 100.00 & 99.64\\
6 & 65.16 & 75.94 & 93.48 & \bf 97.92 & 97.59 & 92.37 & 83.22 & 43.12 & 71.33 & 85.30 & 90.56\\
7 & 84.30 & 78.70 & 87.87 & 88.07 & 82.47 & 76.04 & 88.38 & \bf 90.96 & 66.67 & 63.80 & 78.27\\
8 & 84.10 & 90.22 & 89.39 & 88.14 & \bf 97.62 & 95.81 & 92.33 & 77.42 & 69.61 & 71.53 & 71.73\\
9 & 98.36 & 97.99 & \bf 99.87 & \bf 99.87 & 95.60 & 95.72 & 95.72 & 87.27 & 99.86 & 99.22 & 98.04\\
\hline \hline
OA (\%) & 70.53 & 69.67 & 70.82 & 75.50 & 86.05 & 88.44 & 77.99 & 79.79 & 89.00 & 90.34 & \bf 92.20\\
AA (\%) & 79.68 & 80.18 & 84.44 & 86.26 & 88.99 & 88.14 & 85.19 & 85.07 & 86.91 & 86.84 & \bf 89.66\\
$\kappa$ & 0.6268 & 0.6237 & 0.6423 & 0.6948 & 0.8187 & 0.8472 & 0.7196 & 0.7367 & 0.8540 & 0.8709 & \bf 0.8951\\
\bottomrule[1.5pt]
\end{tabular}
\label{tab:PU}
\end{table*}

\begin{table*}[!t]
\centering
\caption{Quantitative performance comparison of different algorithms in terms of OA, AA, and $\kappa$ on the Houston2013 dataset. The best one is shown in bold.}
\begin{tabular}{c||ccccccc|cccc}
\toprule[1.5pt] Class No. & KNN & RF & SVM & 1-D CNN & 2-D CNN & 3-D CNN & GCN & miniGCN & FuNet-A & FuNet-M & FuNet-C\\
\hline \hline
1 & 83.19 & 83.38 & 83.00 & 87.27 & 85.09 & 84.71 & 90.14 & \bf 98.39 & 84.33 & 83.86 & 85.75\\
2 & 95.68 & 98.40 & 98.40 & 98.21 & 99.91 & 99.34 & 99.08 & 92.11 & \bf 100.00 & 98.59 & 99.44\\
3 & 99.41 & 98.02 & 99.60 & \bf 100.00 & 77.23 & 84.55 & 79.94 & 99.60 & 82.57 & 83.37 & 80.79\\
4 & 97.92 & 97.54 & 98.48 & 92.99 & 97.73 & 98.01 & 96.69 & 96.78 & 98.48 & \bf 98.96 & 98.58\\
5 & 96.12 & 96.40 & 97.82 & 97.35 & 99.53 & 97.82 & 86.18 & 97.73 & 98.86 & \bf 99.72 & 99.24\\
6 & 92.31 & \bf 97.20 & 90.91 & 95.10 & 92.31 & 93.01 & 33.33 & 95.10 & 95.80 & 96.50 & 95.10\\
7 & 80.88 & 82.09 & 90.39 & 77.33 & 92.16 & 86.29 & \bf 97.09 & 57.28 & 88.43 & 89.55 & 91.60\\
8 & 48.62 & 40.65 & 40.46 & 51.38 & 79.39 & 76.26 & 71.63 & 68.09 & 85.94 & \bf 89.36 & 74.83\\
9 & 72.05 & 69.78 & 41.93 & 27.95 & \bf 86.31 & 84.23 & 70.93 & 53.92 & 85.08 & 83.29 & 85.27\\
10 & 53.19 & 57.63 & 62.64 & \bf 90.83 & 43.73 & 74.32 & 72.17 & 77.41 & 72.30 & 79.25 & 79.25\\
11 & 86.24 & 76.09 & 75.43 & 79.32 & \bf 87.00 & 82.35 & 85.22 & 84.91 & 81.69 & 79.89 & 82.35\\
12 & 44.48 & 49.38 & 60.04 & 76.56 & 66.28 & 77.71 & 63.41 & 77.23 & 79.06 & \bf 79.15 & 78.87\\
13 & 28.42 & 61.40 & 49.47 & 69.47 & \bf 90.18 & 81.05 & 62.34 & 50.88 & \bf 90.18 & 87.72 & 89.12\\
14 & 97.57 & \bf 99.60 & 98.79 & 99.19 & 90.69 & 88.66 & 89.73 & 98.38 & 90.69 & 93.93 & 88.26\\
15 & 98.10 & 97.67 & 97.46 & 98.10 & 77.80 & 84.57 & \bf 99.36 & 98.52 & 93.66 & 98.94 & 86.68\\
\hline \hline
OA (\%) & 77.30 & 77.48 & 76.91 & 80.04 & 83.72 & 86.04 & 81.19 & 81.71 & 87.73 & \bf 88.62 & 87.39\\
AA (\%) & 78.28 & 80.35 & 78.99 & 82.74 & 84.35 & 86.19 & 79.82 & 83.09 & 88.47 & \bf 89.47 & 87.68\\
$\kappa$ & 0.7538 & 0.7564 & 74.94 & 0.7835 & 0.8231 & 0.8483 & 0.7962 & 0.8018 & 0.8668 & \bf 0.8764 & 0.8631\\
\bottomrule[1.5pt]
\end{tabular}
\label{tab:H2013}
\end{table*}

\begin{figure}[!t]
	  \centering
		\subfigure[GCNs]{
			\includegraphics[width=0.225\textwidth]{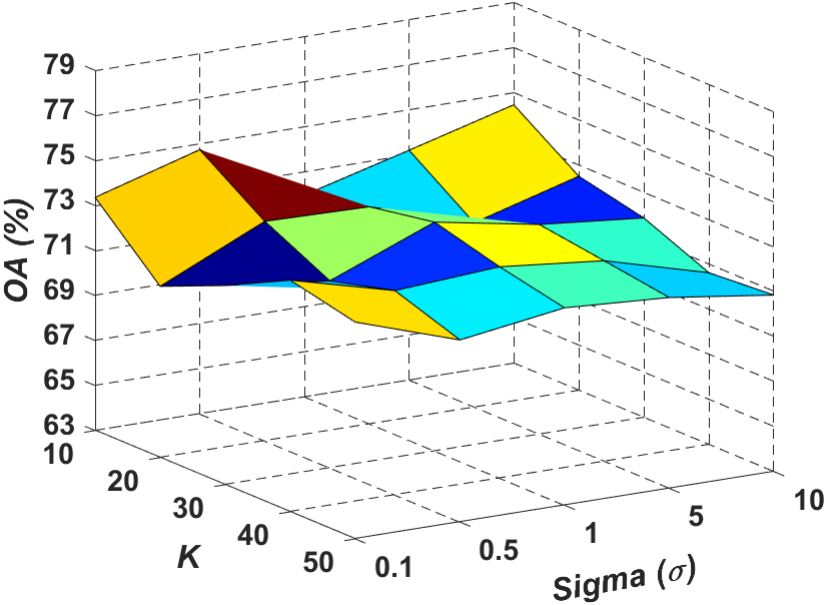}
		}
		\subfigure[miniGCNs]{
			\includegraphics[width=0.225\textwidth]{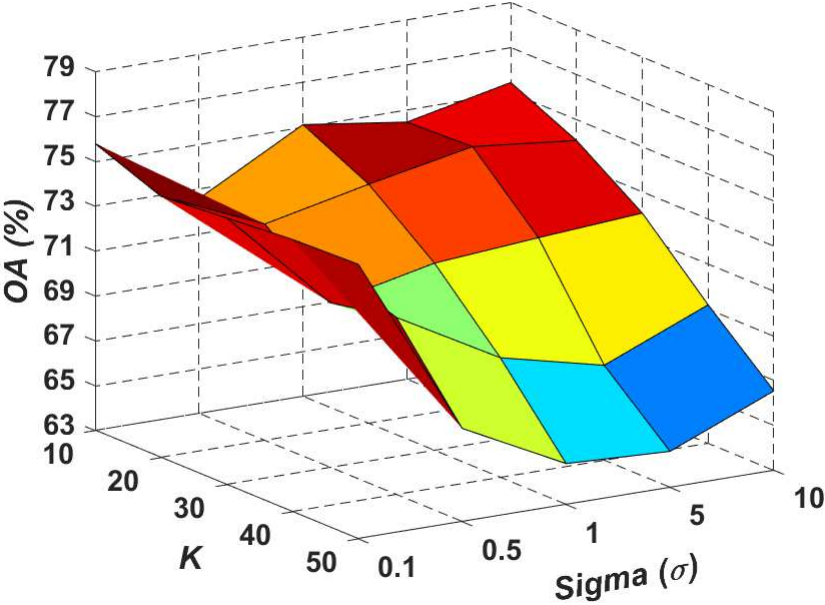}
		}
         \caption{Parameter sensitivity analysis (on the Indian Pines data) of the adjacency matrix $\mathbf{\widetilde{A}}$ [see Eq. (\ref{eq1})] in terms of $K$ and $\sigma$ for GCNs (a) and miniGCNs (b).}
\label{fig:para}
\end{figure}

\subsubsection{Comparison with state-of-the-art baseline methods}
Several state-of-the-art baseline methods have been selected for comparison, including K-nearest neighbor (KNN) classifier, random forest (RF), 1-D CNN, 2-D CNN, GCN, and our proposed miniGCN, as well as three different fusion network with different strategies: FuNet-A, FuNet-M, FuNet-C. The parameter settings are described below:
\begin{itemize}
    \item For the KNN, we set the number of nearest neighbors ($K$) to 10, to be consistent with that of $K$ in GCN-related methods, e.g., GCN, miniGCN, and FuNet.
    \item For the RF, 200 decision trees are used in the classifier.
    \item For the SVM, the well-known libsvm toolbox\footnote{https://www.csie.ntu.edu.tw/$\sim$cjlin/libsvm/} is used for implementation in our case. The considered SVM uses the RBF kernel, whose two optimal hyperparameters $\sigma$ and $\lambda$ (the regularization parameter to balance the training and testing errors) can be determined by five-fold cross validation in the range $\sigma=[2^{-3},2^{-2},\dots,2^{4}]$ and $\lambda=[10^{-2},10^{-1},\dots,10^{4}]$.
    \item For the 1-D CNN, we use one convolutional block including a 1-D convolutional layer with a filter size of 128, a BN layer, a ReLU activation layer, and a softmax layer with the size of $P$, where $P$ denotes the dimension of network output.
    \item For the 2-D CNN (similar to 1-D CNN), the architecture is composed of three 2-D convolutional blocks and a softmax layer. Each convolutional block involves a 2-D conventional layer, a BN layer, a max-pooling layer, and a ReLU activation layer. Moreover, the receptive fields along the spatial and spectral domains for each convolutional layer are $3\times 3 \times 32$, $3\times 3 \times 64$, and $1\times 1 \times 128$, respectively.
    \item For the 3-D CNN, we adopt the same network architecture as the one in \cite{chen2016deep}. The only difference lies in that we remove the dropout layer in each block to make a fair comparison with other networks, e.g., 2-D CNN.
    \item For the GCN, similar to \cite{kipf2016semi}, a graph convolutional hidden layer with 128 units is implemented in the GCN before feeding the features into the softmax layer, where the adjacency matrix $\mathbf{\widetilde{A}}$ can be computed by means of KNN-based graph ($K=10$ in our case). The graph convolution, GCN, and 1-D CNN share the same network configuration for a fair comparison.
    \item Our miniGCN has the same architecture as the GCN. The main difference between GCN and miniGCN lies in the fact that miniGCN is capable of training the networks in batch-wise fashion, and tends to reach a better local optimum of networks.
    \item To better exploit diverse information of HS images, e.g., features extracted from CNNs or GCNs, our FuNets with A, M, and C different fusion strategies are developed by additionally adding a fully-connected (FC) fusion layer behind CNNs and miniGCNs. Table \ref{tab:network_configuration} details the configuration of our FuNet for the layer-wise network architecture.
\end{itemize}

It should be noted, however, that the patch centered by a pixel is usually used as the input of CNNs in HS image classification. In this connection, the original HS image is extended by the ``replicate'' operation, i.e., copying the pixels within the image to that out of the original image boundary, to solve the problem of the boundaries in the CNNs-related experiments.

\begin{figure}[!t]
	  \centering
		\subfigure{
			\includegraphics[width=0.48\textwidth]{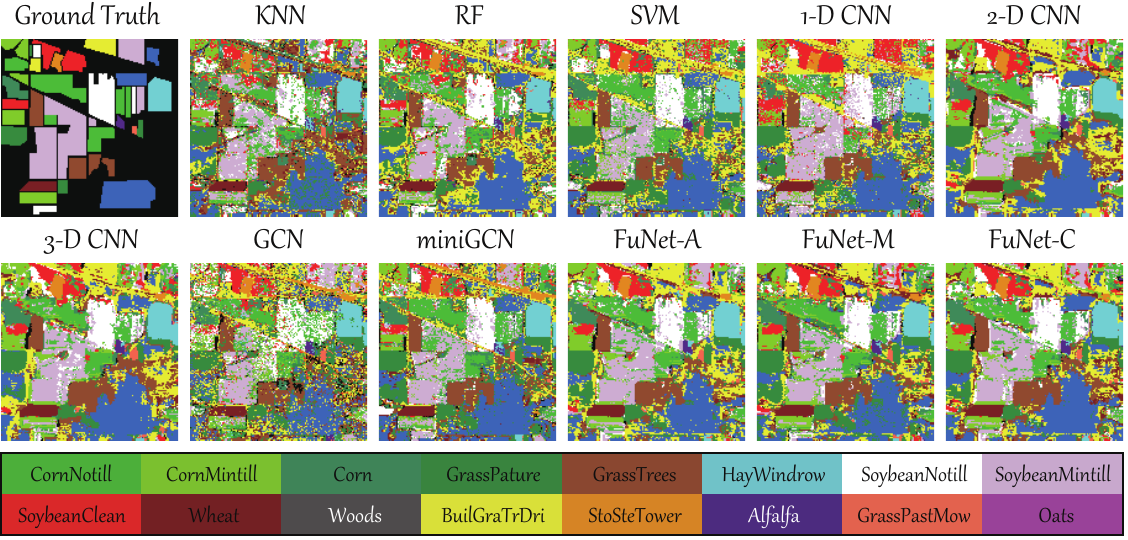}
		}
        \caption{Ground truth and classification maps obtained by different methods on the Indian Pines dataset.}
\label{fig:CM_IP}
\end{figure}

\begin{figure}[!t]
	  \centering
		\subfigure{
			\includegraphics[width=0.48\textwidth]{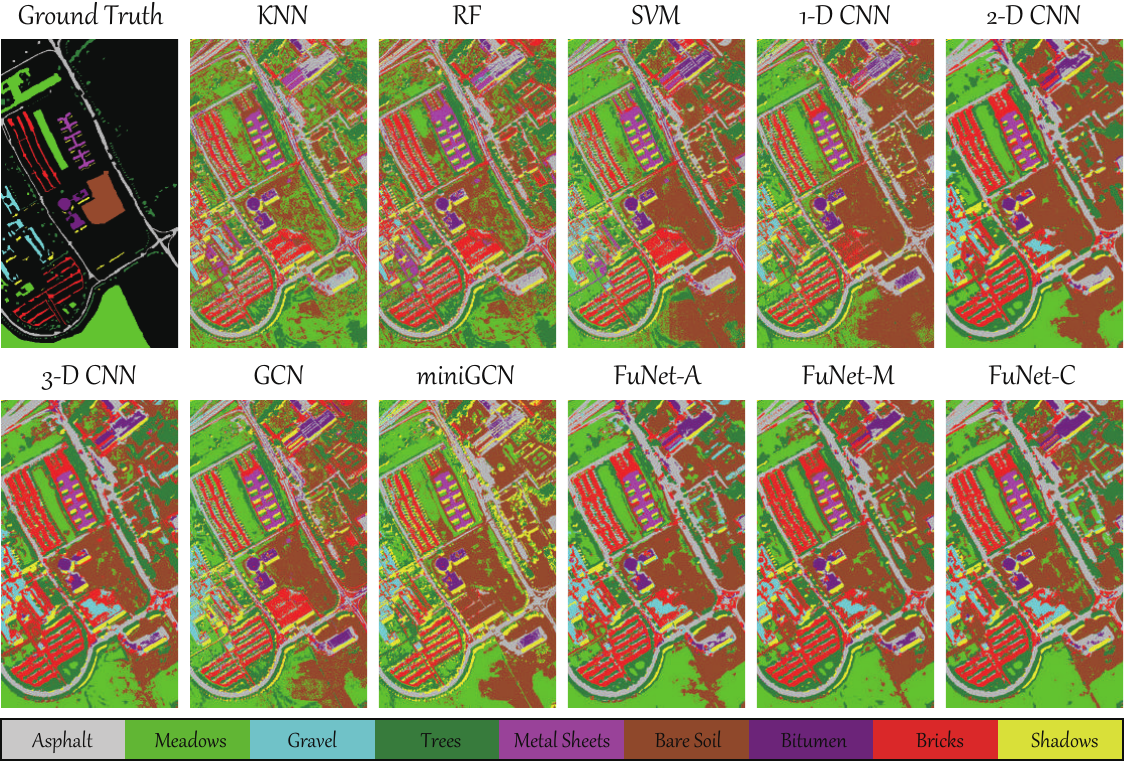}
		}
        \caption{Ground truth and classification maps obtained by different methods on the Pavia university dataset.}
\label{fig:CM_PU}
\end{figure}

\subsection{Parameter Analysis on $\mathbf{\widetilde{A}}$ Generation}
Since the performance of GCNs depends (to some extent) on the quality of adjacency matrix, i.e., $\mathbf{\widetilde{A}}$, it is important to investigate the performance gains that can be obtained by adjusting the two parameters: number of neighbors ($K$) and width of RBF function ($\sigma$) of $\mathbf{\widetilde{A}}$ [see Eq. (\ref{eq1})]. For this purpose, we show the changing trend (in terms of \textit{OA}) for different combinations of the two parameters in the Indian Pines data. More specifically, GCNs and miniGCNs are selected to analyze the parameter sensitivity. As it can be seen from Fig. \ref{fig:para}, the parameter $K$ (to a large extent) dominates the performance gain. Nevertheless, the OAs of GCNs and miniGCNs remain stable with an increase of $K$ value. On the other hand, varying the parameter $\sigma$ only yields a slight performance fluctuation, indicating that this parameter might not be correctly fine-tuned. Most importantly, we observed that the performance gain or derogation in miniGCNs is relatively slow and gentle with the gradual change of the two parameters. In turn, with different parameter combinations, the GCNs leads to comparatively more perturbed results. Moreover, the whole classification performance of GCNs also seems to reach a bottleneck, because its full-batch training strategy usually fails to find a better local optimum. Comprehensively, the parameter combination of $(K, \sigma)$ in our case is set to $(10, 1)$, since this parameter range is relatively stable and, hence, it is applied to the rest of considered datasets for simplicity. 

\begin{figure*}[!t]
	  \centering
		\subfigure{
			\includegraphics[width=0.98\textwidth]{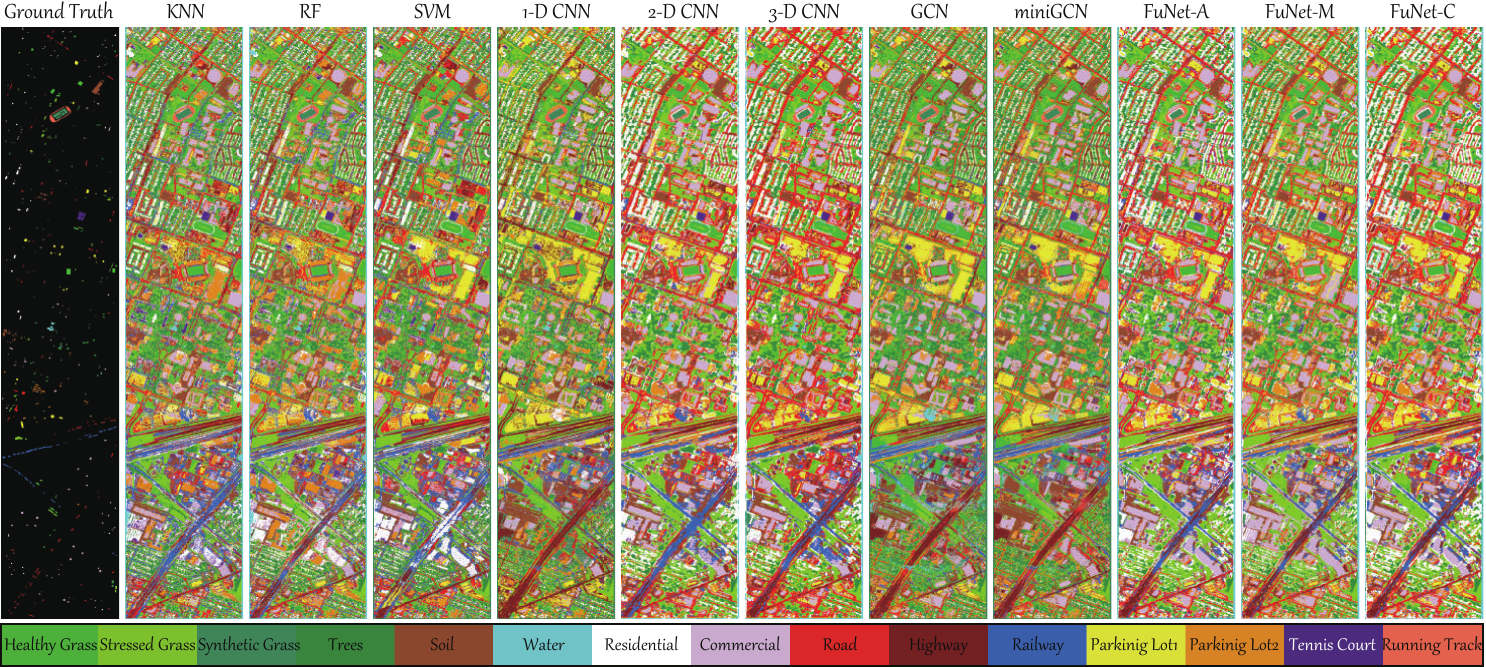}
		}
        \caption{Ground truth and classification maps obtained by different methods on the Houston2013 dataset.}
\label{fig:CM_HH}
\end{figure*}

\subsection{Quantitative Evaluation}
Tables \ref{tab:IP}, \ref{tab:PU}, and \ref{tab:H2013} quantitatively report the classification scores obtained by different methods in terms of \textit{OA}, \textit{AA}, and $\kappa$, as well as the individual class accuracies for the Indian Pines, Pavia University, and Houston2013 datasets, respectively.

Overall, KNN, RF, and SVM obtain similar classification results on the Pavia University and Houston2013 datasets, while the classification performance of the KNN classifier is inferior to that achieved using the RF on the Indian Pines dataset. This might be explained by a few noisy training samples. Please note that there is a similar trend between RF and SVM in classification performance. By means of the powerful learning ability of DL techniques, 1-D CNN, 2-D CNN, 3-D CNN, and GCN perform better than traditional classifiers (KNN, RF, and SVM). Unlike 1-D CNN and GCN, that only consider pixel-wise network input, 2-D CNN and 3-D CNN can extract the spatial-spectral information from HS images more effectively, yielding higher classification accuracies. Not surprisingly, the performance of 3-D CNN is generally superior to that of 2-D CNN, owing to the additional local convolution on the spectral domain. We have to point out, however, that the 3-D CNN requires additional network parameters to be estimated, and tends to suffer from overfitting problems (particularly with limited training samples). The resulting accuracies on the Indian Pines dataset demonstrate these potential problems. Moreover, GCN brings moderate increments of at least 1\% \textit{OA}, \textit{AA}, and $\kappa$ over the 1-D CNN, since the spatial relation between samples can be well-modeled in the form of a graph structure by GCNs. 

Remarkably, our miniGCN achieves stable performance improvements when compared to either GCN or 1-D CNN, even making it comparable to 2-D CNN to some extent, e.g., on the Indian Pines and Houston2013 datasets. As expected, the FuNet (that combines the benefits of CNNs and GCNs) outperforms those single models, demonstrating its ability to fuse different spectral representations. More specifically, a comparison between the three commonly-used fusion strategies reveals that FuNet-C tends to obtain better classification performance compared to FuNet-A and FuNet-M, particularly on the Indian Pines and Pavia University, where there is a dramatic performance improvement (\textit{cf.} Tables \ref{tab:IP} and \ref{tab:PU}).

Furthermore, for those classes that have very few samples, e.g., \textit{Alfalfa}, \textit{Grass Pasture Mowed}, \textit{Oats} on Indian Pines, or unbalanced samples e.g., \textit{Road}, \textit{Parking Lot2} on Houston2013, the 2-D CNN and 3-D CNN can obtain higher classification accuracies by considering the contextual information in both the spatial and spectral domains. On the contrary, the GCN-based models fail to accurately model those classes. But it is worth noting that the fused networks are capable of better identifying these challenging classes, due to the joint use of spatial-spectral (2-D CNN) and relation-augmented (miniGCN) features.

\subsection{Visual Comparison}

We also make a visual comparison between different classification methods in the form of classification maps, as shown in Figs. \ref{fig:CM_IP}, \ref{fig:CM_PU}, and \ref{fig:CM_HH}. In general, pixel-wise classification models (e.g., KNN, RF, SVM, 1-D CNN) result in salt and pepper noise in the classification maps. Although the GCN considers the spatial relation modeling between samples, the use of large graphs constructed based on all samples (and full-batch network training) limits its performance to a great extent, thereby yielding relatively poor classification maps. Our proposed miniGCN extracts the HS features by locally preserving the graph (or manifold) structure in one batch, leading to results that are comparable to those obtained by the 2-D CNN and 3-D CNN. This means that we can achieve relatively robust representations compared to full graph preservation, since the batch-wise strategy can eliminate some errors resulting from the manually-computed adjacency matrix, and further reduce the error accumulation and propagation between layers. As expected, the FuNet-based methods obtain smoother and more detailed maps in comparison with other competitors, mainly due to the effective combination of different features that further enhance the HS representation ability. It should be noted, however, that the batch-wise input in CNNs could lead to losing some edge details to some extent (e.g., 2-D CNN, and 3-D CNN). This explains why the classification maps obtained by FuNets are not as sharp (in terms of edge delination) as those obtained by only using miniGCNs.

\section{Conclusion}
Owing to the embedding of graph (or topological) structure, GCNs can properly characterize the underlying data structure of HS images in high-dimensional space, but inevitably introduce some drawbacks, e.g., high storage and computational cost when computing the adjacency matrix, gradient exploding or vanishing problems (due to full-batch network training), and the need to re-train these networks when new data are fed. In order to address these problems, in this paper we develop a new supervised version of GCNs, called miniGCNs, which allows us to train large-scale graph networks in a mini-batch fashion. Owing to their batch-wise network training strategy, our newly proposed miniGCNs are more flexible, in the sense that they not only yield lower computational cost and stable local optima in the training phase, but also can directly predict the new input samples, i.e., the out-of-sample cases, with no need to re-train the network. More significantly, our trainable mini-batch strategy makes it possible to jointly use CNNs and GCNs for extracting more diverse and discriminative feature representations for the HS image classification task. To exploit this property, we have further investigated several fusion modules: A, M, and C, that integrate CNNs and miniGCNs in an end-to-end trainable fashion. Our experimental results, conducted on three widely used HS datasets, demonstrate the effectiveness and superiority of our newly proposed miniGCNs as compared to the traditional GCNs. Also, the FuNet (with different fusion strategies) has been shown to be superior to using single model (e.g., CNNs, miniGCNs).

In the future, we will investigate the possible combination of different deep networks and our miniGCNs, and also develop more advanced fusion modules, e.g., weighted fusion, to fully exploit the rich spectral information contained in HS images.

\section*{Acknowledgments}

The authors would like to the Hyperspectral Image Analysis group at the University of Houston for providing the CASI University of Houston datasets and the IEEE GRSS DFC2013.

\bibliographystyle{ieeetr}
\bibliography{HDF_ref}

\begin{IEEEbiography}[{\includegraphics[width=1in,height=1.25in,clip,keepaspectratio]{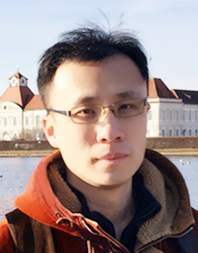}}]{Danfeng Hong}
(S'16-M'19) received the M.Sc. degree (summa cum laude) in computer vision, College of Information Engineering, Qingdao University, Qingdao, China, in 2015, the Dr. -Ing degree (summa cum laude) in Signal Processing in Earth Observation (SiPEO), Technical University of Munich (TUM), Munich, Germany, in 2019. 

Since 2015, he worked as a Research Associate at the Remote Sensing Technology Institute (IMF), German Aerospace Center (DLR), Oberpfaffenhofen, Germany. Currently, he is a research scientist and leads a Spectral Vision working group at IMF, DLR, and also an adjunct scientist in GIPSA-lab, Grenoble INP, CNRS, Univ. Grenoble Alpes, Grenoble, France. 

His research interests include signal / image processing and analysis, hyperspectral remote sensing, machine / deep learning, artificial intelligence and their applications in Earth Vision.
\end{IEEEbiography}

\vskip -2\baselineskip plus -1fil

\begin{IEEEbiography}[{\includegraphics[width=1in,height=1.25in,clip,keepaspectratio]{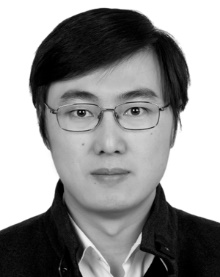}}]{Lianru Gao} (M'12-SM'18) received the B.S. degree in civil engineering from Tsinghua University, Beijing, China, in 2002, the Ph.D. degree in cartography and geographic information system from Institute of Remote Sensing Applications, Chinese Academy of Sciences (CAS), Beijing, China, in 2007.

He is currently a Professor with the Key Laboratory of Digital Earth Science, Aerospace Information Research Institute, CAS. He also has been a visiting scholar at the University of Extremadura, Cáceres, Spain, in 2014, and at the Mississippi State University (MSU), Starkville, USA, in 2016. His research focuses on hyperspectral image processing and information extraction. In last ten years, he was the PI of 10 scientific research projects at national and ministerial levels, including projects by the National Natural Science Foundation of China (2010-2012, 2016-2019, 2018-2020), and by the Key Research Program of the CAS (2013-2015). He has published more than 160 peer-reviewed papers, and there are more than 80 journal papers included by SCI. He was coauthor of an academic book entitled ``Hyperspectral Image Classification And Target Detection''. He obtained 28 National Invention Patents in China. He was awarded the Outstanding Science and Technology Achievement Prize of the CAS in 2016, and was supported by the China National Science Fund for Excellent Young Scholars in 2017, and won the Second Prize of The State Scientific and Technological Progress Award in 2018. He received the recognition of the Best Reviewers of the IEEE Journal of Selected Topics in Applied Earth Observations and Remote Sensing in 2015, and the Best Reviewers of the IEEE Transactions on Geoscience and Remote Sensing in 2017.
\end{IEEEbiography}

\vskip -2\baselineskip plus -1fil

\begin{IEEEbiography}[{\includegraphics[width=1in,height=1.25in,clip,keepaspectratio]{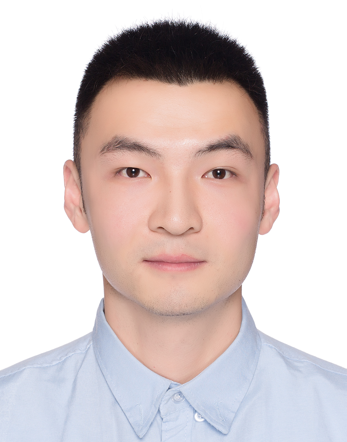}}]{Jing Yao} received the B.Sc. degree from Northwest University, Xi’an, China, in 2014. He is currently pursuing the Ph.D. degree with the School of Mathematics and Statistics, Xi’an Jiaotong University, Xi’an, China. 

From 2019 to 2020, he is a visiting student at Signal Processing in Earth Observation (SiPEO), Technical University of Munich (TUM), Munich, Germany, and at the Remote Sensing Technology Institute (IMF), German Aerospace Center (DLR), Oberpfaffenhofen, Germany.

His research interests include low-rank modeling, hyperspectral image analysis and deep learning-based image processing methods.
\end{IEEEbiography}

\vskip -2\baselineskip plus -1fil

\begin{IEEEbiography}[{\includegraphics[width=1in,height=1.25in,clip,keepaspectratio]{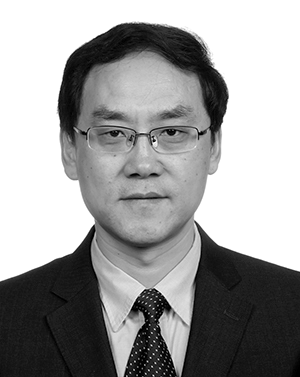}}]{Bing Zhang} (M'11–SM'12-F'19) received the B.S. degree in geography from Peking University, Beijing, China, in 1991, and the M.S. and Ph.D. degrees in remote sensing from the Institute of Remote Sensing Applications, Chinese Academy of Sciences (CAS), Beijing, China, in 1994 and 2003, respectively.

Currently, he is a Full Professor and the Deputy Director of the Aerospace Information Research Institute, CAS, where he has been leading lots of key scientific projects in the area of hyperspectral remote sensing for more than 25 years. His research interests include the development of Mathematical and Physical models and image processing software for the analysis of hyperspectral remote sensing data in many different areas. He has developed 5 software systems in the image processing and applications. His creative achievements were rewarded 10 important prizes from Chinese government, and special government allowances of the Chinese State Council. He was awarded the National Science Foundation for Distinguished Young Scholars of China in 2013, and was awarded the 2016 Outstanding Science and Technology Achievement Prize of the Chinese Academy of Sciences, the highest level of Awards for the CAS scholars.

Dr. Zhang has authored more than 300 publications, including more than 170 journal papers. He has edited 6 books/contributed book chapters on hyperspectral image processing and subsequent applications. He is the IEEE fellow and currently serving as the Associate Editor for IEEE Journal of Selected Topics in Applied Earth Observations and Remote Sensing. He has been serving as Technical Committee Member of IEEE Workshop on Hyperspectral Image and Signal Processing since 2011, and as the president of hyperspectral remote sensing committee of China National Committee of International Society for Digital Earth since 2012, and as the Standing Director of Chinese Society of Space Research (CSSR) since 2016. He is the Student Paper Competition Committee member in IGARSS from 2015-2019.
\end{IEEEbiography}

\vskip -2\baselineskip plus -1fil
\begin{IEEEbiography}[{\includegraphics[width=1in,height=1.25in,clip,keepaspectratio]{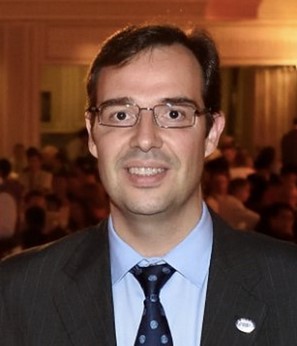}}]{Antonio Plaza} (M’05-SM’07-F’15) received the M.Sc. degree and the Ph.D. degree in computer engineering from the Hyperspectral Computing Laboratory, Department of Technology of Computers and Communications, University of Extremadura, C\'aceres, Spain, in 1999 and 2002, respectively. He is currently the Head of the Hyperspectral Computing Laboratory, Department of Technology of Computers and Communications, University of Extremadura. He has authored more than 600 publications, including over 200 JCR journal articles (over 160 in IEEE journals), 23 book chapters, and around 300 peer-reviewed conference proceeding papers. His research interests include hyperspectral data processing and parallel computing of remote sensing data.

Dr. Plaza was a member of the Editorial Board of the IEEE Geoscience and Remote Sensing Newsletter from 2011 to 2012 and the IEEE GEOSCIENCE AND REMOTE SENSING MAGAZINE in 2013. He was also a member of the Steering Committee of the IEEE JOURNAL OF SELECTED TOPICS IN APPLIED EARTH OBSERVATIONS AND REMOTE SENSING (JSTARS). He is also a fellow of IEEE for contributions to hyperspectral data processing and parallel computing of earth observation data. He received the recognition as a Best Reviewer of the IEEE GEOSCIENCE AND REMOTE SENSING LETTERS, in 2009, and the IEEE TRANSACTIONS ON GEOSCIENCE AND REMOTE SENSING, in 2010, for which he has served as an Associate Editor from 2007 to 2012. He was also a recipient of the Most Highly Cited Paper (2005–2010) in the Journal of Parallel and Distributed Computing, the 2013 Best Paper Award of the IEEE JOURNAL OF SELECTED TOPICS IN APPLIED EARTH OBSERVATIONS AND REMOTE SENSING (JSTARS), and the Best Column Award of the IEEE Signal Processing Magazine in 2015. He received Best Paper Awards at the IEEE International Conference on Space Technology and the IEEE Symposium on Signal Processing and Information Technology. He has served as the Director of Education Activities for the IEEE Geoscience and Remote Sensing Society (GRSS) from 2011 to 2012 and as the President of the Spanish Chapter of IEEE GRSS from 2012 to 2016. He has reviewed more than 500 manuscripts for over 50 different journals. He has served as the Editor-in-Chief of the IEEE TRANSACTIONS ON GEOSCIENCE AND REMOTE SENSING from 2013 to 2017. He has guestedited ten special issues on hyperspectral remote sensing for different journals. He is also an Associate Editor of IEEE ACCESS (received the recognition as an Outstanding Associate Editor of the journal in 2017). Additional information: http://www.umbc.edu/rssipl/people/aplaza
\end{IEEEbiography}

\vskip -2\baselineskip plus -1fil

\begin{IEEEbiography}[{\includegraphics[width=1in,height=1.25in,clip,keepaspectratio]{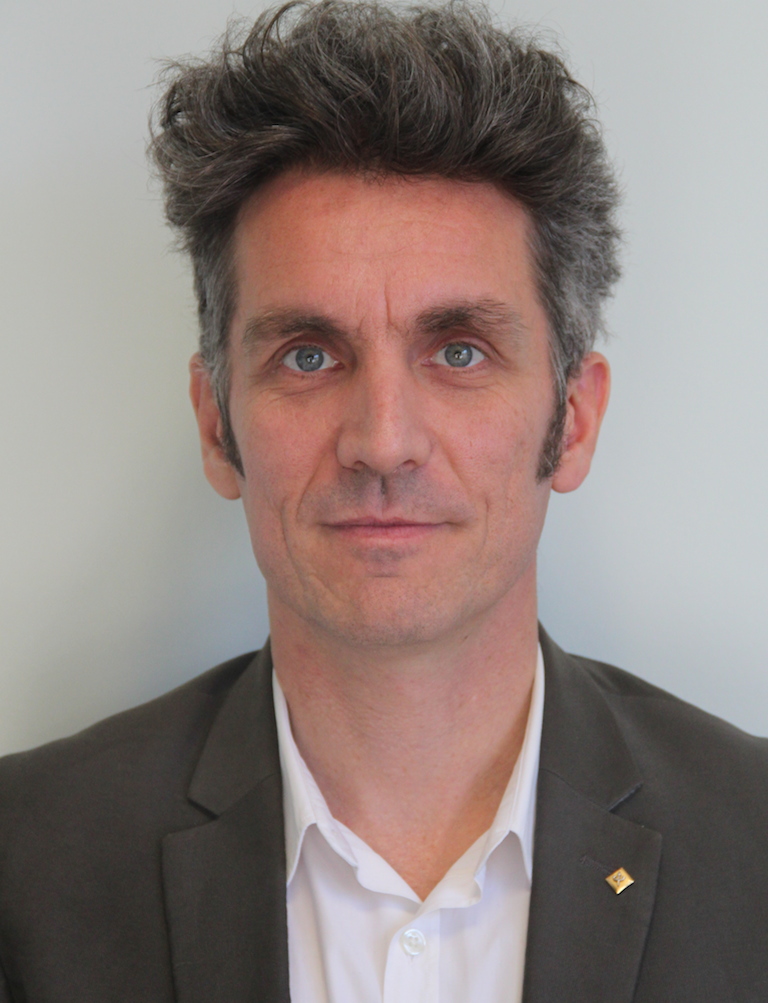}}]{Jocelyn Chanussot}
(M'04–SM'04-F'12) received the M.Sc. degree in electrical engineering from the Grenoble Institute of Technology (Grenoble INP), Grenoble, France, in 1995, and the Ph.D. degree from the Université de Savoie, Annecy, France, in 1998. Since 1999, he has been with Grenoble INP, where he is currently a Professor of signal and image processing. His research interests include image analysis, hyperspectral remote sensing, data fusion, machine learning and artificial intelligence. He has been a visiting scholar at Stanford University (USA), KTH (Sweden) and NUS (Singapore). Since 2013, he is an Adjunct Professor of the University of Iceland. In 2015-2017, he was a visiting professor at the University of California, Los Angeles (UCLA). He holds the AXA chair in remote sensing and is an Adjunct professor at the Chinese Academy of Sciences, Aerospace Information research Institute, Beijing.

Dr. Chanussot is the founding President of IEEE Geoscience and Remote Sensing French chapter (2007-2010) which received the 2010 IEEE GRS-S Chapter Excellence Award. He has received multiple outstanding paper awards. He was the Vice-President of the IEEE Geoscience and Remote Sensing Society, in charge of meetings and symposia (2017-2019). He was the General Chair of the first IEEE GRSS Workshop on Hyperspectral Image and Signal Processing, Evolution in Remote sensing (WHISPERS). He was the Chair (2009-2011) and  Cochair of the GRS Data Fusion Technical Committee (2005-2008). He was a member of the Machine Learning for Signal Processing Technical Committee of the IEEE Signal Processing Society (2006-2008) and the Program Chair of the IEEE International Workshop on Machine Learning for Signal Processing (2009). He is an Associate Editor for the IEEE Transactions on Geoscience and Remote Sensing, the IEEE Transactions on Image Processing and the Proceedings of the IEEE. He was the Editor-in-Chief of the IEEE Journal of Selected Topics in Applied Earth Observations and Remote Sensing (2011-2015). In 2014 he served as a Guest Editor for the IEEE Signal Processing Magazine. He is a Fellow of the IEEE, a member of the Institut Universitaire de France (2012-2017) and a Highly Cited Researcher (Clarivate Analytics/Thomson Reuters, 2018-2019).

\end{IEEEbiography}
\end{document}